\documentclass{article}

 \usepackage[preprint]{neurips_2026}

\usepackage{soul}

\usepackage[shortlabels]{enumitem}

\usepackage{microtype}
\usepackage{graphicx}
\usepackage{subcaption}
\usepackage{booktabs} % for professional tables
\usepackage{multirow}

\usepackage{booktabs}
\usepackage{colortbl}
\usepackage{xcolor}
\usepackage{adjustbox}
\usepackage{pifont} % For checkmarks/crosses if needed
\definecolor{rowgray}{gray}{0.92}
\definecolor{winblue}{RGB}{235, 245, 255}
\definecolor{lightgray}{gray}{0.92} % white–grey

\usepackage{hyperref}

\usepackage{amsmath}
\usepackage{amssymb}
\usepackage{mathtools}
\usepackage{amsthm}
\usepackage{amsfonts}
\usepackage{dsfont}
\usepackage{bm}

\usepackage[capitalize,noabbrev]{cleveref}

\theoremstyle{plain}
\newtheorem{theorem}{Theorem}[section]
\newtheorem{proposition}[theorem]{Proposition}

\theoremstyle{definition}

\usepackage{amsthm}
\usepackage{xcolor}

\theoremstyle{remark}

\usepackage[textsize=tiny]{todonotes}

\usepackage{wrapfig}
\usepackage{caption}

\usepackage[utf8]{inputenc} % allow utf-8 input
\usepackage[T1]{fontenc}    % use 8-bit T1 fonts
\usepackage{hyperref}       % hyperlinks
\usepackage{url}            % simple URL typesetting
\usepackage{booktabs}       % professional-quality tables
\usepackage{amsfonts}       % blackboard math symbols
\usepackage{nicefrac}       % compact symbols for 1/2, etc.
\usepackage{microtype}      % microtypography
\usepackage{xcolor}         % colors
\usepackage{wrapfig}
\usepackage{capt-of}

\title{AVIS: Adaptive Test-Time Scaling for Vision–Language Models}

\author{%
  Ahmadreza Jeddi$^{1,2,3\dagger}$\thanks{Equal contribution. Correspondence to: \texttt{ajeddi@cs.toronto.edu}. $\dagger$ Work done at Samsung.}
  \quad Minh N. Le$^{1,2,3\dagger*}$
  \,Amirhossein Kazerouni$^{1,2,3*}$
  \,Hakki Karaimer$^{1}$\\
  \textbf{Hue Nguyen}$^{1}$
  \quad \textbf{Iqbal Mohomed}$^{1}$
  \quad \textbf{Michael Brudno}$^{2,3}$
  \quad \textbf{Konstantinos G. Derpanis}$^{1,3,4}$\\
  \textbf{Alex Levinshtein}$^{1}$
  \quad \textbf{Babak Taati}$^{2,3}$
  \quad \textbf{Radek Grzeszczuk}$^{1\dagger}$\\
  $^1$ AI Center-Toronto, Samsung Electronics \quad
  $^2$ University of Toronto \\
  $^3$ Vector Institute \quad
  $^4$ York University
}

\begin{document}

\maketitle

\begin{abstract}
Modern Vision-Language Models (VLMs) benefit from chain-of-thought prompting and test-time scaling, but these gains often come with prohibitive inference cost due to large visual contexts and long decoding chains. We view this cost through two coupled axes: Visual Context Scaling (VCS), which controls how much visual evidence is passed to the language model, and Visual Reasoning Scaling (VRS), which controls how much inference-time reasoning search is performed. Existing methods typically optimize one axis at a time, leaving the joint allocation of compute across these axes underexplored. We introduce Adaptive Visual Inference Scaling (AVIS), a lightweight policy that adapts both VCS and VRS per query. AVIS realizes VCS through Key Diversity Visual (KDV) pruning, a training-free $O(N)$ key-based rule for removing redundant visual tokens before prefilling, and realizes VRS through adaptive self-consistency, using a learned difficulty predictor to select the number of reasoning rollouts. AVIS is deployment-friendly and compatible with shared-prefill inference, where all rollouts reuse a single prefilling pass and KV cache. Across diverse image and video reasoning benchmarks, AVIS improves the accuracy--compute trade-off relative to VCS-only and VRS-only baselines, and remains effective on top of RL post-trained VLMs while keeping compute and latency low. Project page: \url{https://avis-vlm.github.io/}

\end{abstract}

\addtocontents{toc}{\protect\setcounter{tocdepth}{-1}} % Disable adding to ToC

\definecolor{customred}{HTML}{ED028C}

\section{Introduction}
\label{sec:intro}

\begin{wrapfigure}{R}{0.49\textwidth}
    \centering
    \vspace{-0.5em}
    \includegraphics[width=\linewidth]{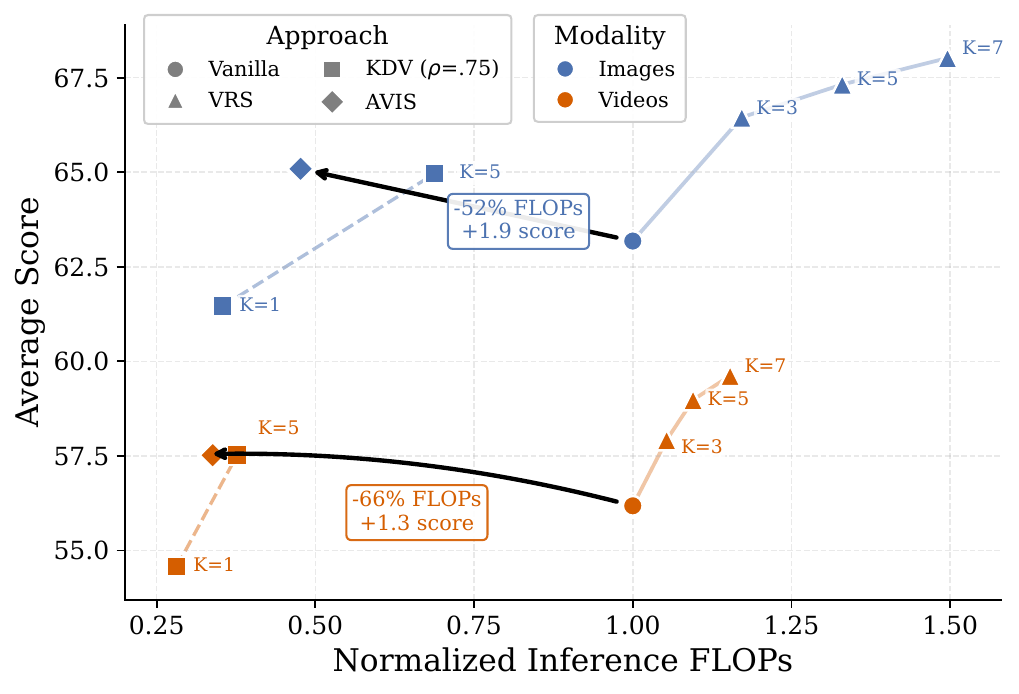}
    \caption{
    \textbf{Accuracy--compute trade-off on Qwen2.5-VL-7B.}
    We report average score over 12 image benchmarks (blue) and 6 video benchmarks (red), with inference FLOPs normalized to Vanilla ($\rho{=}0$, $K{=}1$).
    The plot compares Vanilla, VRS with fixed rollout budgets, KDV with visual pruning, and AVIS with adaptive compute allocation.
    AVIS improves over Vanilla while substantially reducing FLOPs, achieving a better accuracy–compute trade-off by reallocating compute from redundant visual context to additional rollouts only when needed.
    }
    \label{fig:teaser}
    \vspace{-1.0em}
\end{wrapfigure}

Vision-Language Models (VLMs) are rapidly evolving from captioning systems into strong visual reasoners~\cite{bai2025qwen2, li2024llava, zhu2025internvl3}. A key driver has been chain-of-thought (CoT) prompting and reasoning-oriented post-training, which encourage step-by-step rationales instead of single-shot answers~\cite{guo2025deepseek,huang2025vision-r1,wei2022chain}. While effective on complex tasks,  these advances substantially increase inference cost: high-resolution images or videos induce large visual contexts, and CoT-based inference often requires long generations or repeated rollouts, such as Best-of-$N$~\cite{stiennon2020learning,ichihara2025evaluation} and self-consistency~\cite{wang2022self}. As VLMs are increasingly deployed, a central challenge is \emph{adaptive visual reasoning}: \emph{adaptively} allocating inference compute, spending little on easy instances and more on challenging ones, without sacrificing accuracy.

For a given input pair $(I,Q)$, where $I$ is an image or video and $Q$ is a query, inference compute is primarily spent along two coupled axes: \emph{seeing} and \emph{thinking}. The \emph{seeing} compute controls how much visual evidence is passed to the language model as context, such as how many visual tokens are retained after compression or pruning. We refer to this as \textbf{Visual Context Scaling (VCS)}. Increasing VCS preserves richer evidence, but raises prefilling cost and KV-cache memory. In contrast, the \emph{thinking} compute controls inference-time reasoning search, either \emph{sequentially} through longer or iterative CoT traces, or \emph{in parallel} through sampling multiple trajectories and aggregating them with Best-of-$N$ or self-consistency. We refer to this as \textbf{Visual Reasoning Scaling (VRS)}. Increasing VRS expands the search over reasoning paths, but increases decoding compute.

Existing inference methods typically optimize one axis at a time, independently. Visual token pruning and compression~\cite{chen2024image, shang2025llava} reduce VCS for efficiency, often under single-pass decoding, while test-time scaling increases VRS through longer CoT traces or multiple rollouts~\cite{ahmadpour2025limits}, typically treating the visual context as fixed. This leaves a key gap:  \textit{how should a VLM adaptively allocate resources between VCS and VRS for each $(I,Q)$?} We cast this as a compute-allocation problem. Given a multimodal query workload and a configuration $\theta$ that controls both visual context size and reasoning search, we characterize the resulting trade-off surface and show how to exploit it to improve accuracy while minimizing compute.

We present \textbf{Adaptive Visual Inference Scaling (AVIS)}, a lightweight, sample-dependent policy for test-time compute allocation in VLMsf. Given a target workload, AVIS predicts a per-sample inference configuration over both VCS and VRS. Intuitively, it decides how much visual evidence to retain and how much inference-time reasoning search to perform to improve accuracy without unnecessary computation. We formalize this as a compute-allocation problem over the two inference axes introduced above. In AVIS, VCS is implemented through visual token pruning before prefilling, while VRS is implemented through selfconsistency over K chain-of-thought rollouts.

Based on this instantiation, AVIS adopts a two-stage policy for adaptive compute allocation. To reduce VCS, we introduce \textbf{Key Diversity Visual (KDV)} pruning, a training-free visual token selection rule applied before prefilling that uses diversity in attention-key representations as a redundancy signal. KDV runs in $O(N)$ time and requires no additional training. For VRS, AVIS uses self-consistency with majority voting and a difficulty predictor. The predictor allocates the largest rollout counts to an intermediate hard-but-solvable questions, while assigning fewer rollouts to samples at either easy and hard extreme. These choices interact naturally with shared-prefill inference: pruning lowers the one-time prefilling cost, and the resulting KV cache can be reused across all $K$ rollouts, making additional reasoning search substantially cheaper. In this way, AVIS reallocates compute from redundant visual context to useful reasoning search while keeping inference cost low.

We evaluate AVIS across a wide range of visual reasoning tasks, spanning mathematical, visual reasoning, and general VQA benchmarks on both images and videos. We compare against VCS-only and VRS-only baselines, reporting accuracy alongside inference compute measured in FLOPs. \autoref{fig:teaser} shows how our method improves the model performance while also reducing its compute. We further show that shared-prefill makes parallel search efficient: on modern inference engines, AVIS scales reasoning and improves performance while keeping both FLOPs and latency below the corresponding baseline. We also conduct extensive ablations to characterize the VCS--VRS trade-off surface and analyze how AVIS interacts with RL-post-trained models. For consistency, we use Qwen2.5-VL \cite{bai2025qwen2} as our primary backbone due to its popularity and strong open-source ecosystem. Across settings, AVIS’s sample-dependent allocation consistently improves accuracy at fixed or reduced compute, capturing much of the benefit of test-time scaling without its full cost.

Our main contributions are as follows:
\vspace{-0.3cm}
\begin{enumerate}[leftmargin=*,labelindent=0pt,label=\textbf{--}]
    \item We introduce a unified view of VLM inference along two coupled axes, Visual Context Scaling (VCS) and Visual Reasoning Scaling (VRS), and characterize their interaction and trade-offs.
    \vspace{-0.2cm}
    \item We propose AVIS, a lightweight policy that adaptively prunes redundant visual context and scales reasoning search, enabling sample-dependent compute allocation.
    \vspace{-0.2cm}
    \item We introduce KDV, a training-free $O(N)$ visual token pruning rule for VCS, together with a lightweight difficulty predictor for VRS.
\end{enumerate}

\vspace{-1em}
\section{Related Work}
\vspace{-0.75em}
\label{sec:related}
\noindent\textbf{Reasoning VLMs.}
VLMs have improved dramatically, delivering strong results across a broad set of visual and multimodal tasks~\cite{liu2023visual, li2023blip, liu2024llavanext, li2024llava, bai2025qwen2, zhu2025internvl3}. These gains have been enabled by larger and better-aligned LLM backbones, stronger vision encoders with improved resolution/temporal handling, higher-quality data, and reasoning-focused post-training~\cite{huang2025vision-r1, xu2025llava, yao2024mulberry, wang2025vl, jeddi2025puzzle, shen2025vlm, zhang2025r1, jeddi2026does}.
Despite rapid progress in VLM training, \emph{test-time} compute allocation for visual reasoning remains underexplored, especially how to trade off compute spent on visual context versus reasoning-time search.

% \noindent\textbf{Visual Reasoning Scaling (VRS).}
% VRS improves reasoning by spending more inference-time compute on search, either via \emph{parallel} sampling-and-aggregation (e.g., Best-of-$N$, self-consistency and majority vote)~\cite{wang2022self,chen2021evaluating,brown2024large, rakhsha2025majority} or \emph{sequential} refinement (e.g., iterative revise-and-resample, reflection)~\cite{madaan2023self, lightman2023let, wang2024math}.
% This space is well-studied for LLMs, including compute-aware sampling and aggregation policies~\cite{muennighoff2025s1, li2025selfbudgeter, manvi2024adaptive, snell2024scaling-llm, damani2024learning}. For VLMs, some suggest tuning VLMs with RL so that they can more adaptively spend Chain of thought depending on task difficulty~\cite{meng2025cyberv, jin2025cotvid, huang2026learning, chen2025ares}.recent work~\cite{kaya2025efficient, ahmadpour2025limits, zhang2025improve, wu2025aha} reports consistent Best-of-$N$ / majority-vote gains on multimodal benchmarks across model families and post-training recipes. However, these methods typically treat the visual context as fixed, rather than co-optimizing \emph{seeing} (visual context) and \emph{thinking} (search).

\noindent\textbf{Visual Reasoning Scaling (VRS).}
VRS improves reasoning by allocating additional inference-time compute to search, either through \emph{parallel} sampling and aggregation, such as Best-of-$N$, self-consistency, and majority voting~\cite{wang2022self,chen2021evaluating,brown2024large,rakhsha2025majority}, or through \emph{sequential} refinement, such as iterative revision, resampling, and reflection~\cite{madaan2023self,lightman2023let,wang2024math}.
This direction has been extensively studied for LLMs, including compute-aware sampling and aggregation policies~\cite{muennighoff2025s1,li2025selfbudgeter,manvi2024adaptive,snell2024scaling-llm,damani2024learning}.
For VLMs, recent work has explored RL-based tuning to enable models to allocate CoT length more adaptively based on task difficulty~\cite{meng2025cyberv,jin2025cotvid,huang2026learning,chen2025ares}.
Other studies report consistent Best-of-$N$ and majority-vote gains on multimodal benchmarks across model families and post-training recipes~\cite{kaya2025efficient,ahmadpour2025limits,zhang2025improve,wu2025aha}.
However, these approaches typically treat the visual context as fixed, rather than jointly optimizing \emph{seeing} through visual context allocation and \emph{thinking} through reasoning-time search.

\noindent\textbf{Visual Context Scaling (VCS).}
A key efficiency bottleneck in VLM inference is visual context length: high-resolution images, multi-crop inputs, and video frames produce long visual token sequences that inflate prefilling FLOPs and KV-cache memory.
Accordingly, many methods reduce visual tokens/frames before or during prefilling, varying by \emph{where} pruning occurs~\cite{jeddi2025similarity} (ViT-only, early-to-late LLM layers, or hybrid) and by \emph{what signal} is used: (i) \emph{importance/attention-based} heuristics~\cite{chen2024image, xing2024pyramiddrop, zhang2024sparsevlm, yang2025visionzip, huang2025prunevid} or (ii) \emph{redundancy/similarity-based} grouping~\cite{bolya2022token, alvar2025divprune, jeddi2025similarity, zhang2025beyond-attention, zhang2025beyond-text, shang2025llava, yang2025topv, li2025catp}.
Pruning signals interact with deployment constraints: optimized kernels (e.g., FlashAttention~\cite{dao2022flashattention}) do not materialize full attention matrices, so attention-based methods that reconstruct attention at selected layers~\cite{yang2025visionzip, zhang2025vscan} can add substantial overhead and may trigger out-of-memory for long contexts (e.g., videos). Similarity/graph-based approaches can also be costly due to iterative $O(N^2)$ time and memory~\cite{alvar2025divprune,jeddi2025similarity}.
While any of these methods could plug into our pipeline, we opt for a simpler key-diversity proxy inspired by KeyDiff~\cite{park2025keydiff}, which links lower pairwise cosine similarity among keys to higher attention scores and proposes an $O(N)$ eviction rule; we adapt and improve this technique for visual token pruning. Finally, VCS is often studied under single-pass decoding with heuristic prune rates; we extend it to joint, sample-dependent search over both VCS and reasoning-time search.

\begin{figure*}[t]
    \centering
    \resizebox{\textwidth}{!}{%
        \includegraphics{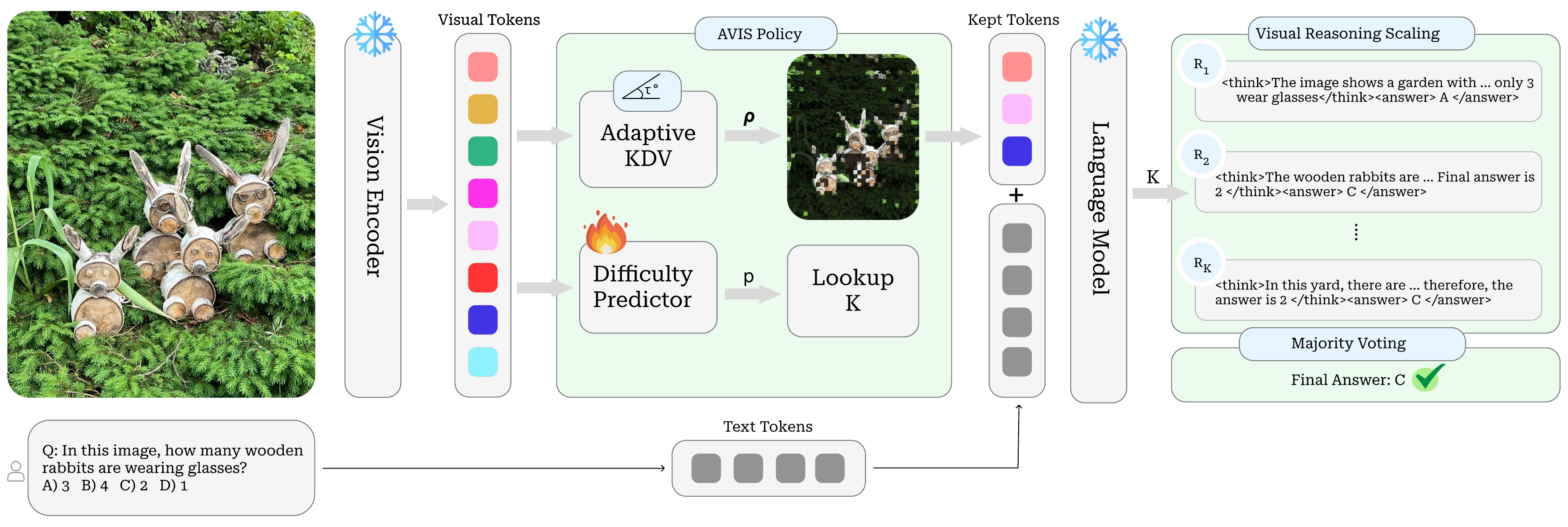}%
    }
    \vspace{-1.5em}
    \caption{\textbf{Adaptive Visual Inference Scaling (AVIS).}
    Given a query $x=(I,Q)$, we first apply an adaptive, training-free key-diversity pruning rule to remove redundant visual tokens and produce a compact visual prefix.
    We then run self-consistency with $K$ shared-prefill rollouts and aggregate answers by majority vote.
    A lightweight difficulty predictor estimates per-sample difficulty and maps it to an appropriate rollout count $K$, enabling adaptive compute allocation across the workload.}
    \label{fig:method_overview}
    \vspace{-1.5em}
\end{figure*}

\vspace{-0.75em}
\section{Problem Setup}
\vspace{-0.5em}
\label{sec:problem_setup}

We consider a multimodal query $x=(I,Q)$, where $I$ is an image or video and $Q$ is a textual prompt. A VLM encodes $I$ into visual embeddings $E_v \in \mathbb{R}^{n_v \times d}$ and tokenizes $Q$ into text embeddings $E_q \in \mathbb{R}^{n_q \times d}$. Conditioned on this multimodal prefix, the language model generates $y \sim P_{\phi}(\cdot \mid I,Q)$, which may include reasoning tokens and a final answer $a(y)$.

At test time, inference compute is spent in three places: (i) the vision encoder forward pass, (ii) a \emph{prefill} pass over the multimodal prefix that builds the KV cache, and (iii) autoregressive decoding. We focus on two sample-dependent inference levers that govern this cost: VCS, which controls how much visual evidence is passed to the language model, and VRS, which controls how much inference-time reasoning search is performed. For each query $x$, we represent an inference configuration by $\theta=(\rho, K)$, where $\rho \in (0,1]$ denotes the pruned fraction of visual tokens and $K$ the number of reasoning trajectories sampled at decoding time. In general, increasing $\rho$ lowers prefilling cost and KV-cache usage, while increasing $K$ raises decoding cost.

\subsection{Adaptive Compute Objective}
\label{sec:adaptive_objective}

Given a target workload $\mathcal{D}_{\text{target}}=\{(x_i,y_i^\star)\}_{i=1}^{D}$, where $x_i=(I_i,Q_i)$, let $B_{\text{base}}$ denote the total inference compute of a baseline VLM evaluated with no visual pruning and $K=1$. Our goal is to improve accuracy by reallocating compute across examples through VCS and VRS, while keeping the total expected compute below this baseline.

For a query $x_i$, let $\hat{a}(x_i;\theta_i)$ denote the prediction under inference configuration $\theta_i=(\rho_i,K_i)$, and let $F(x_i;\theta_i)$ denote the corresponding inference compute. We seek a \emph{sample-dependent} allocation policy $\pi$ that maps each query to a configuration ($\theta_i=\pi(x_i)$) to maximize accuracy while having lower compute than that of the baseline:
\begin{equation}
\max_{\pi}\;\sum_{i=1}^{D}\mathbb{E}\!\left[\mathds{1}\!\left(\hat{a}(x_i;\pi(x_i)) = y_i^\star\right)\right]
\quad
\text{s.t.}\quad
\sum_{i=1}^{D}\mathbb{E}\!\left[F(x_i;\pi(x_i))\right] \le B_{\text{base}} .
\label{eq:budgeted_objective}
\end{equation}
Under mild assumptions on the scaling of prefilling and decoding cost in the shared-prefill regime, the per-example inference compute under configuration $\theta_i=(\rho_i,K_i)$ can be approximated as\footnote{The derivation and empirical validation are deferred to the appendix.}
\begin{equation}
F(x_i;\theta_i) \approx F_v(I_i) + C_{\mathcal{M}} \big(\rho_i n_{v,i} + n_{q,i} + K_i T_i \big),
\label{eq:per_example_cost}
\end{equation}
where $C_{\mathcal{M}}$ is an LLM-dependent constant denoting the average FLOPs incurred by the language model stack per processed token, and $T_i$ is the number of decoded tokens under the vanilla baseline with no pruning and $K=1$. Since $F_v(I_i)$ is fixed across all inference configurations, Eq.~\eqref{eq:per_example_cost} suggests that the total compute allocated to a query can be linearly estimated and controlled through two quantities: the retained prefill tokens and the total decoded tokens.

\subsection{Visual Context Scaling}
\label{sec:vcs}

VCS refers to how much visual evidence is provided to the language model. In general, VCS could be controlled through token compression, pooling, or pruning. In AVIS, we instantiate VCS through visual token pruning before prefilling, i.e., by selecting a subset of visual tokens from $E_v$ and passing only those retained tokens to the language model. This choice directly reduces multimodal prefix length, lowering prefilling FLOPs and KV-cache memory.

To implement this efficiently, we introduce \textbf{Key Diversity Visual (KDV)} pruning, a training-free key-based scoring rule inspired by KeyDiff~\cite{park2025keydiff}, originally proposed for KV-cache eviction in LLMs. KDV scores each visual token using diversity in its attention-key representation, favoring tokens whose keys are less aligned with the average key direction and thus less redundant. To better capture multi-head structure, we compute a head-aware score
\begin{equation}
r_i
=
-\frac{1}{H}
\sum_{h=1}^{H}
\operatorname{CosSim}\!\left(
\mu(\hat{K}_h), \hat{k}_{h,i}
\right),
\label{eq:kdv}
\end{equation}
where $\hat{k}_{h,i}$ is the normalized key of token $i$ at head $h$, and $\mu(\hat{K}_h)$ is the mean normalized key for that head. We then retain the top-ranked tokens according to $r_i$, yielding a retained fraction $\rho$. KDV is training-free, runs in $O(n_v)$ time up to the top-$k$ selection step, and is compatible with optimized attention kernels. KDV prunes visual tokens before the language-model. This placement is particularly well suited to AVIS: the shared prefill can be reused across all reasoning rollouts. Moreover, recent work suggests that pruning before the LLM can outperform in-LLM pruning~\cite{zhang2025beyond-text,zhang2025beyond-attention}, which is consistent with our empirical findings. Together, these choices make KDV a lightweight, practical, and effective VCS mechanism for adaptive multimodal inference.

% We apply KDV after the vision encoder and before the language-model shared prefill. This placement is central to AVIS: it reduces the visual prefix once, before any reasoning rollout, allowing the pruned KV cache to be reused across all parallel samples. It also keeps the method architecture-agnostic, requiring no modification to the vision encoder, language model, or decoding procedure. Moreover, recent work suggests that pruning before the LLM can outperform in-LLM pruning, likely due to better modality alignment~\cite{vispruner,CDPruner}, which is consistent with our empirical findings. These design choices make KDV a lightweight, practical, and effective pruning method for adaptive visual context scaling.

\subsection{Visual Reasoning Scaling}
\label{sec:vrs}

VRS refers to the amount of inference-time reasoning search performed for a query. In principle, this can be increased in several ways, such as longer chain-of-thought traces, iterative refinement, or multiple sampled rollouts. In AVIS, we instantiate VRS through \emph{self-consistency} with majority voting over $K$ independently sampled chain-of-thought trajectories. Larger $K$ expands the search over reasoning paths, but also increases decoding compute.

A key practical advantage of this choice is its compatibility with shared-prefill inference: all $K$ rollouts reuse the same visual features and prefilling KV cache, so increasing $K$ mainly adds decoding-side cost rather than repeating the full multimodal forward pass. In the next section, AVIS uses a lightweight difficulty-aware predictor to choose $K$ adaptively for each query.

Given our adaptive compute allocation objective and the two concrete algorithms that control VCS and VRS, we now introduce \textbf{AVIS}, a lightweight policy that approximately optimizes this objective.

\vspace{-0.5em}
\section{Adaptive Visual Inference Scaling (AVIS)}
\label{sec:policy}
\vspace{-0.25em}
AVIS employs a two-stage approach to solve ~\eqref{eq:budgeted_objective}:
(i) an adaptive visual-token selection rule that reduces redundant visual context, and (ii) a difficulty-aware rule that allocates reasoning rollouts via
self-consistency. The overall architecture is shown in \autoref{fig:method_overview}.
% \vspace{-0.5em}
\subsection{Adaptive Visual Context Scaling}
\label{sec:aakdv}
% \vspace{-0.25em}
\noindent\textbf{Adaptive KDV.}
Recall the KDV retention score $r_i$ \eqref{eq:kdv}, which is the (negative) mean cosine alignment of
token $i$ to the per-head anchor directions. Instead of selecting a fixed top-$N$ set, we obtain an
\emph{adaptive} retained set by thresholding key--anchor angular separation. For a hyperparameter
$\tau\in[0,\pi]$, we retain tokens whose average angle from the anchor exceeds $\tau$, i.e.,
\begin{equation}
\label{eq:aakdv}
S_\tau(x)
\;=\;
\big\{ i \in \{1,\ldots,n_v\} \;:\; r_i \ge -\cos(\tau) \big\}.
\end{equation}
This yields a sample-dependent retained fraction $\rho(x)=|S_\tau(x)|/n_v$ at essentially the same cost
as KDV scoring (linear in $n_v$, up to thresholding). 
% We treat $\tau$ as a hyperparameter; unless stated otherwise we use $\tau=\pi/4$ (45$^\circ$), and ablate $\tau$ in the appendix.

% \vspace{-0.5em}
\subsection{Adaptive Visual Reasoning Scaling}
\label{sec:diff_vrs}
% \vspace{-0.25em}
Self-consistency draws $K$ i.i.d.\
reasoning trajectories and aggregates their extracted answers via majority voting. Let $p(x)$ denote the probability that a \emph{single} trajectory
produces the correct answer. Under the standard i.i.d.\ model,\footnote{For multi-class answers, self-consistency returns the \emph{plurality} winner (largest vote count), not necessarily a strict majority. Eq.~\eqref{eq:psc} corresponds to the binary/``correct vs.\ incorrect'' view and serves as a useful approximation for understanding how accuracy scales with $K$.}
the probability that self-consistency is correct is
\begin{equation}
\label{eq:psc}
P_{\mathrm{SC}}(p,K) \;=\; \sum_{j=\lceil (K+1)/2\rceil}^{K} \binom{K}{j} p^j(1-p)^{K-j}.
\end{equation}
This highlights two regimes: when $p(x)$ is already close to $1$, additional trajectories yield
diminishing returns; when $p(x){<}0.5$, more trajectories may even
degrade accuracy. Our policy, therefore, allocates a larger $K$ for queries that are likely solvable
but not trivial.

\par\noindent\textbf{Learned difficulty predictor.}
We train a lightweight head that predicts, for each query $x$, a \emph{solvability score} $\hat{p}(x)\in[0,1]$ from the visual embeddings $E_v$. The predictor and the rollout-selection rule play distinct roles: the predictor learns an input-dependent signal for whether additional reasoning is likely to help, while the subsequent binning rule is a calibrated decision layer that converts this signal into a discrete rollout budget. This distinction is important because the benefit of self-consistency is not monotonic in difficulty. From~\autoref{eq:psc}, increasing $K$ has little marginal value when $p(x)$ is already high \emph{(very easy)}, since a single trajectory is likely correct, and can also be ineffective when $p(x)$ is very low \emph{(very unlikely to be solvable)}, since majority voting is unlikely to recover the correct answer. The largest expected gain, therefore, occurs for intermediate, \emph{hard-but-solvable} examples. AVIS uses this observation to allocate additional rollouts only to inputs with high predicted marginal utility, rather than simply assigning more compute to all difficult examples.

To obtain supervision, we construct a representative calibration set and run the base VLM with $K_{\max}=10$ rollouts for each example. We estimate its empirical solvability as $p(x_j)=\frac{1}{K_{\max}}\sum_{k=1}^{K_{\max}}\mathds{1}\!\left[\hat y^{(k)}(x_j)=y_j^\star\right],$ and binarize this value with threshold $0.5$ to indicate whether the example is solvable under strong self-consistency. The predictor takes the visual embeddings as input, applies a short stack of 1D Conv--GroupNorm--SiLU layers along the token dimension, followed by global average pooling and a shallow MLP, and outputs logits $s(x_j)\in\mathbb{R}^2$. At test time, we interpret
$\hat p(x_j)=\mathrm{softmax}(s(x_j))_1$ as the predicted solvability score. Finally, because the deployment action space is discrete, $K\in\{1,3,5,7\}$, we map $\hat p(x)$ to a rollout count using a calibrated piecewise-constant budget policy:
\[
\begin{array}{c|cccccc}
\hat p(x) & [0,b_1) & [b_1,b_2) & [b_2,b_3) & [b_3,b_4) & [b_4,b_5) & [b_5,1] \\
\hline
K & 1 & 5 & 7 & 5 & 3 & 1
\end{array}
\]
The bin boundaries $\{b_j\}_{j=1}^5$ are selected once on the calibration set to satisfy the global compute budget and are then fixed for all evaluations. Thus, the binning rule is not intended as a learned model by itself; it is a deployment-friendly action-selection layer over a learned solvability predictor, designed to concentrate reasoning compute on the examples where self-consistency is expected to provide the highest return. The non-monotonic shape follows the inverted-U marginal utility of majority voting~\autoref{eq:psc}: rollouts help most for intermediate $\hat p(x)$ and are wasted at either extreme.

\noindent\textbf{Calibration set.} We construct a calibration set $\mathcal{D}_{\mathrm{cal}}=\{(x_j,y_j^\star)\}_{j=1}^{M}$
representative of the deployment workload. Each example $x_j=(I_j,Q_j)$ is a \emph{multi-choice}
query with a ground-truth answer $y_j^\star$. Following S1 guidelines~\cite{muennighoff2025s1}, we curate $\mathcal{D}_{cal}$ using three principles:
\textbf{Quality}, \textbf{Diversity}, and \textbf{Difficulty}. Starting with a set of 73{,}139 images and videos we arrive at 5000 multi-choice questions ($4000$ images, $1000$ videos) to train the policy. Details of the curation process are provided in \S\autoref{supp:sec:data_curation}. Next, for each $(x_j,y_j^\star)$, we run the base VLM with $K_{\max}{=}10$ rollouts, $p(x_j)$ is then calculated as $\frac{1}{K_{\max}}\sum_{k=1}^{K_{\max}}\mathds{1}\!\big(\hat{y}^{(k)}(x_j)=y_j^\star\big).$ We then binarize this signal using a fixed threshold 0.5 to obtain a supervision label $y_j \;=\; \mathds{1}\!\big(p_j \ge 0.5\big),$ which indicates whether $x_j$ is “solvable” under strong self-consistency.

During the training, the predictor takes as input the sequence of visual token embeddings, processes them with a short stack of 1D Conv–GroupNorm–SiLU layers along the token dimension, followed by global average pooling, and feeds the resulting vector into a shallow MLP producing logits $s(x_j)\in\mathbb{R}^2$.
At test time, we interpret $\hat{p}(x_j) =\mathrm{softmax}\big(s(x_j)\big)_{1}$ as the predicted solvability probability and deterministically map it to a rollout count $K\in\{1,3,5,7\}$ via the binning rule.

\vspace{-0.75em}
\section{Experiments}
\vspace{-0.5em}
\label{sec:experiments}

% \input{tables/main_videos}
% \vspace{-0.25em}
\subsection{Experimental Setup}
We evaluate the effectiveness and efficiency of our adaptive visual reasoning policy, \textsc{AVIS}, on image and video benchmarks.
We use Qwen2.5-VL-7B~\cite{bai2025qwen2} as the backbone and run all methods in CoT mode using the
\texttt{<think>} \texttt{</think>} and \texttt{<answer>} \texttt{</answer>} format~\cite{guo2025deepseek}.
For \textsc{AVIS}, we use the pruning threshold $\tau=\pi/4$, which yields an average pruning ratio of $\approx75\%$; our analysis (~\autoref{fig:vcs_vrs_tradeoff}) shows this choice preserves key visual evidence while providing substantial compute savings.
When $K{=}1$ we use greedy decoding; when $K{>}1$ we sample with temperature $0.7$ and top-$p$ $0.9$ and aggregate by majority vote.
We evaluate image and video benchmarks using VLMEvalKit~\cite{duan2024vlmevalkit}.
% \vspace{-0.25em}
% We evaluate the effectiveness and efficiency of our adaptive visual reasoning policy, \textsc{SLTM}, on image and video benchmarks.
% We use Qwen2.5-VL-7B~\cite{bai2025qwen2} as the backbone and run all methods in CoT mode using the
% \texttt{<think>} \texttt{</think>} and \texttt{<answer>} \texttt{</answer>} format~\cite{guo2025deepseek}.
% For \textsc{AVIS}, we use the default pruning threshold $\tau=\pi/4$, which yields an average pruning ratio of $\approx75\%$; our empirical analysis (~\autoref{fig:vcs_vrs_tradeoff}) shows this choice preserves key visual evidence while providing substantial compute savings.
% When $K{=}1$ we use greedy decoding; when $K{>}1$ we sample with temperature $0.7$ and top-$p$ $0.9$ and aggregate by majority vote.
% We evaluate all image and video benchmarks using VLMEvalKit~\cite{duan2024vlmevalkit}.

\noindent\textbf{Baselines.}
We compare against a \emph{Vanilla} baseline ($K{=}1$, no pruning) and fixed VRS-VCS setups. For pruning baselines, we choose a fixed pruning ratio $\rho{=}0.75$ which is motivated by our finding in \autoref{fig:vcs_vrs_tradeoff} that this pruning level offers a favorable accuracy–efficiency trade-off.
% We also report a fixed joint configuration with $(\rho{=}0.75,\,K{=}5)$.

\noindent\textbf{Image benchmarks.}
We evaluate on a diverse set of multimodal benchmarks. For images our benchmarks span math reasoning (MathVista~\cite{lu2023mathvista}, MathVerse~\cite{zhang2024mathverse}, MathVision~\cite{wang2024measuring}), OCR (DOCVQA~\cite{mathew2021docvqa}), multi-discipline reasoning (MMMU-Pro~\cite{yue2025mmmu}), and general VQA (MME~\cite{fu2025mme}, MMStar~\cite{chen2024we}, MMBench~\cite{liu2024mmbench}, CVBench-2D~\cite{tong2024cambrian}, POPE~\cite{li2023evaluating}, BLINK~\cite{fu2024blink}, TreeBench~\cite{wang2026treebench}).

\noindent\textbf{Video benchmarks.}
For video understanding and reasoning, we evaluate on VideoMME~\cite{fu2025video}, TempCompass~\cite{liu2024tempcompass}, Video-TT~\cite{zhang2025towards}, MVBench~\cite{li2024mvbench}, Q-Bench-Video~\cite{zhang2025q}, and Video-MMMU~\cite{hu2025video}.

\noindent\textbf{Compute measurement.} We compare efficiency using FLOPs normalized by the \emph{Vanilla} setting.
Since all configurations share the same vision encoder, differences are mainly driven by language-model prefill and decoding. We also report wall-clock latency in \S\ref{subsec:latency}.

\begin{table}[t]
\centering
\small
\setlength{\tabcolsep}{3pt}
\caption{
\textbf{Performance across inference strategies.}
We compare VRS-only scaling, KDV-based VCS pruning, fixed joint VCS--VRS configurations, and AVIS, which adaptively selects both $\rho$ and $K$ at test time.
AVIS achieves the best overall compute--performance trade-off across 12 image and 6 video benchmarks.
\#F denotes FLOPs relative to Vanilla ($\rho{=}0,K{=}1$); lower \#F is better.
}
% \vspace{0.5em}
\resizebox{\textwidth}{!}{%
\begin{tabular}{ll|cc|cccccc|cccc|cc}
\toprule
\multirow{3}{*}{\textbf{Category}}
& \multirow{3}{*}{\textbf{Benchmark}}
& \multicolumn{2}{c|}{\textbf{Vanilla}}
& \multicolumn{6}{c|}{\textbf{VRS-Only}}
& \multicolumn{4}{c|}{\textbf{KDV-Prune}}
& \multicolumn{2}{c}{\cellcolor{winblue}\textbf{AVIS}} \\
\cmidrule(lr){3-4}\cmidrule(lr){5-10}\cmidrule(lr){11-14}\cmidrule(lr){15-16}
&
& \multicolumn{2}{c|}{$\rho{=}0,\;K{=}1$}
& \multicolumn{2}{c}{$\rho{=}0,\;K{=}3$}
& \multicolumn{2}{c}{$\rho{=}0,\;K{=}5$}
& \multicolumn{2}{c|}{$\rho{=}0,\;K{=}7$}
& \multicolumn{2}{c}{$\rho{=}75\%,\;K{=}1$}
& \multicolumn{2}{c|}{$\rho{=}75\%,\;K{=}5$}
& \multicolumn{2}{c}{\cellcolor{winblue}Adaptive} \\
\cmidrule(lr){3-4}\cmidrule(lr){5-6}\cmidrule(lr){7-8}\cmidrule(lr){9-10}\cmidrule(lr){11-12}\cmidrule(lr){13-14}\cmidrule(lr){15-16}
&
& Score & \#F
& Score & \#F & Score & \#F & Score & \#F
& Score & \#F & Score & \#F
& \cellcolor{winblue} Score & \cellcolor{winblue} \#F \\
\midrule
\multirow{3}{*}{\textbf{Math}}
& MathVista \cite{lu2023mathvista}    & 67.5 & 1.0 & 68.7 & 1.23 & 70.5 & 1.45 & 70.2 & 1.67 & 66.6 & 0.38 & 68.9 & 0.83 & \cellcolor{winblue}68.1 & \cellcolor{winblue}0.46 \\
& MathVerse \cite{zhang2024mathverse}    & 57.5 & 1.0 & 59.1 & 1.21 & 62.9 & 1.42 & 63.9 & 1.67 & 55.7 & 0.39 & 59.5 & 0.82 & \cellcolor{winblue}58.9 & \cellcolor{winblue}0.45 \\
& MathVision \cite{wang2024measuring}   & 22.4 & 1.0 & 29.9 & 1.35 & 27.0 & 1.71 & 31.6 & 2.04 & 22.3 & 0.45 & 25.4 & 1.04 & \cellcolor{winblue}25.0 & \cellcolor{winblue}0.78 \\
\midrule
\multirow{9}{*}{\textbf{Image VQA}}
& DocVQA \cite{mathew2021docvqa}       & 81.43 & 1.0 & 88.44 & 1.03 & 89.71 & 1.06 & 90.73 & 1.09 & 76.79 & 0.27 & 83.97 & 0.36 & \cellcolor{winblue}87.71 & \cellcolor{winblue}0.32 \\
& MMMU-Pro \cite{yue2025mmmu}    & 42.62 & 1.0 & 43.66 & 1.18 & 45.68 & 1.30 & 46.23 & 1.46 & 40.54 & 0.32 & 43.95 & 0.88 & \cellcolor{winblue}43.10 & \cellcolor{winblue}0.68 \\
& MME \cite{fu2025mme}         & 2263 & 1.0 & 2396 & 1.13 & 2386 & 1.26 & 2397 & 1.39 & 2227 & 0.33 & 2373 & 0.58 & \cellcolor{winblue}2364 & \cellcolor{winblue}0.42 \\
& MMStar \cite{chen2024we}       & 62.0 & 1.0 & 63.6 & 1.20 & 65.8 & 1.40 & 66.4 & 1.61 & 59.0 & 0.38 & 61.5 & 0.79 & \cellcolor{winblue}61.6 & \cellcolor{winblue}0.52 \\
& MMBench \cite{liu2024mmbench}     & 81.4 & 1.0 & 87.8 & 1.16 & 88.3 & 1.32 & 88.6 & 1.48 & 79.2 & 0.36 & 86.8 & 0.68 & \cellcolor{winblue}86.4 & \cellcolor{winblue}0.44 \\
& CVBench \cite{tong2024cambrian}     & 70.6 & 1.0 & 73.9 & 1.11 & 74.9 & 1.22 & 75.5 & 1.42 & 70.5 & 0.33 & 74.3 & 0.55 & \cellcolor{winblue}73.1 & \cellcolor{winblue}0.43 \\
& POPE \cite{li2023evaluating}        & 84.3 & 1.0 & 84.6 & 1.10 & 84.9 & 1.19 & 84.8 & 1.28 & 83.9 & 0.32 & 84.7 & 0.50 & \cellcolor{winblue}84.5 & \cellcolor{winblue}0.38 \\
& BLINK \cite{fu2024blink}        & 56.8 & 1.0 & 56.9 & 1.15 & 58.2 & 1.28 & 57.4 & 1.42 & 51.9 & 0.34 & 55.1 & 0.58 & \cellcolor{winblue}56.7 & \cellcolor{winblue}0.42 \\
& TreeBench \cite{wang2026treebench}   & 37.3 & 1.0 & 41.0 & 1.21 & 40.5 & 1.35 & 41.0 & 1.42 & 38.3 & 0.38 & 36.5 & 0.64 & \cellcolor{winblue}37.5 & \cellcolor{winblue}0.42 \\
\midrule
\multirow{6}{*}{\textbf{Video VQA}}
& Video-MME \cite{fu2025video}   & 65.2 & 1.0 & 66.4 & 1.08 & 70.0 & 1.12 & 70.2 & 1.24 & 62.8 & 0.28 & 66.0 & 0.36 & \cellcolor{winblue}65.8 & \cellcolor{winblue}0.32 \\
& TempCompass \cite{liu2024tempcompass} & 73.6 & 1.0 & 77.8 & 1.02 & 79.0 & 1.04 & 78.6 & 1.06 & 72.4 & 0.26 & 75.6 & 0.30 & \cellcolor{winblue}76.2 & \cellcolor{winblue}0.28 \\
& Video-TT \cite{zhang2025towards}    & 35.2 & 1.0 & 37.6 & 1.02 & 36.8 & 1.04 & 38.6 & 1.05 & 36.4 & 0.27 & 38.4 & 0.30 & \cellcolor{winblue}37.6 & \cellcolor{winblue}0.29 \\
& MVBench \cite{li2024mvbench} & 56.2 & 1.0 & 57.6 & 1.09 & 59.8 & 1.17 & 60.0 & 1.27 & 54.5 & 0.31 & 57.4 & 0.49 & \cellcolor{winblue}57.8 & \cellcolor{winblue}0.46 \\
& Q-Bench-Video \cite{zhang2025q} & 60.5 & 1.0 & 59.9 & 1.06 & 60.5 & 1.12 & 62.2 & 1.18 & 58.4 & 0.29 & 61.3 & 0.42 & \cellcolor{winblue}60.9 & \cellcolor{winblue}0.38 \\
& Video-MMMU \cite{hu2025video}  & 46.4 & 1.0 & 48.2 & 1.05 & 47.8 & 1.08 & 48.1 & 1.12 & 42.9 & 0.28 & 46.4 & 0.33 & \cellcolor{winblue}46.8 & \cellcolor{winblue}0.30 \\
\bottomrule
\end{tabular}
}
\label{tab:main_results}
\vspace{-2em}
\end{table}

\begin{center}
\vspace{-0.5em}
\noindent
\begin{minipage}[t]{0.48\linewidth}
    \centering
    \vspace{0pt}
    \includegraphics[width=\linewidth]{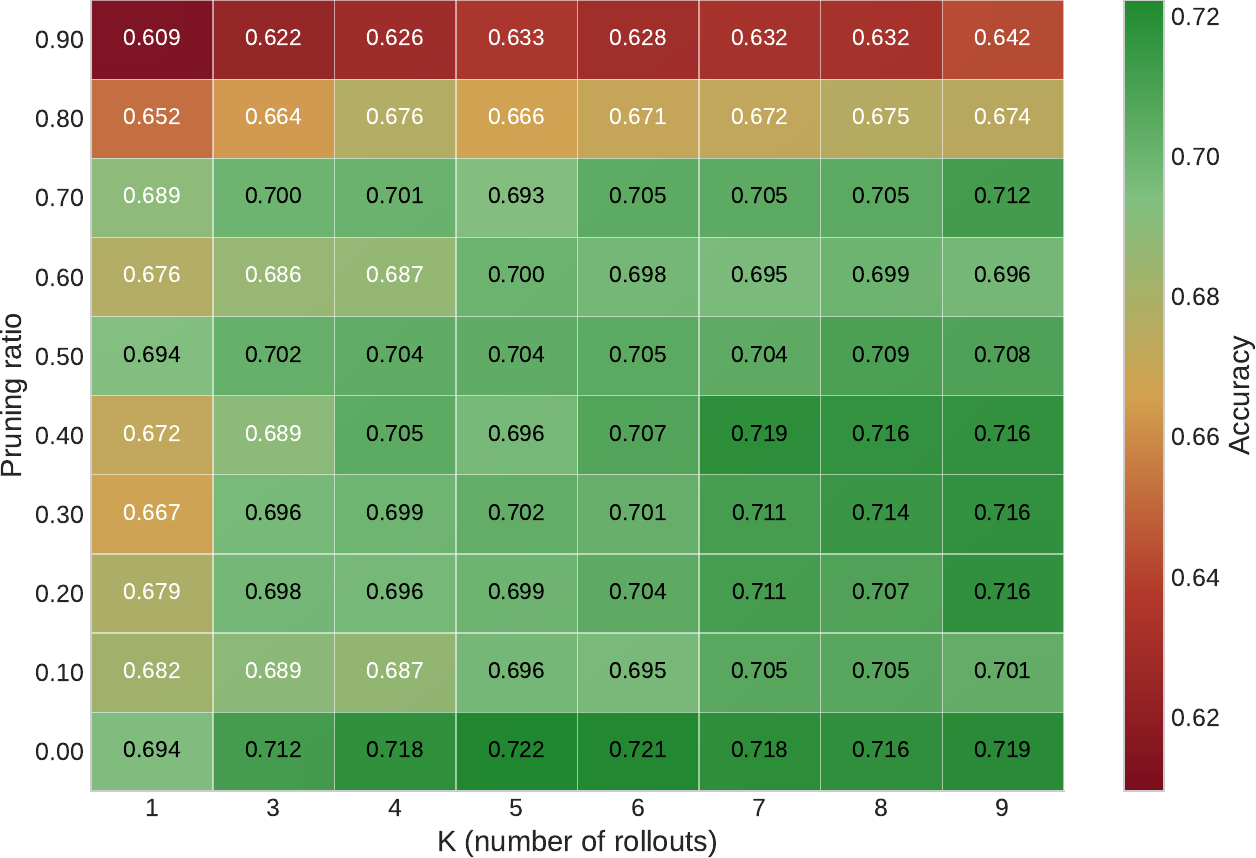}
    \captionof{figure}{\textbf{VCS-VRS Trade-off.} Accuracy heatmap over pruning ratio $\rho$ and rollout count $K$ on the calibration dataset. Each cell shows the resulting accuracy, with lighter colors indicating better performance. Accuracy improves as $K$ increases under varying pruning ratios.}
    \label{fig:vcs_vrs_tradeoff}
\end{minipage}
\hfill
\begin{minipage}[t]{0.48\linewidth}
    \centering
    \vspace{0pt}

\scriptsize
\setlength{\tabcolsep}{1.5pt}
\renewcommand{\arraystretch}{1.3}

% \captionof{table}{%
% \textbf{Performance comparison under matched FLOPs budgets.}
% AVIS uses ${\sim}45\%$ of vanilla FLOPs on average. We select fixed configurations
% ($\rho{=}0.6,\;K{=}1$ and $\rho{=}0.85,\;K{=}5$) that approximate this budget to enable a controlled comparison.
% Under comparable compute, our policy consistently outperforms all fixed baselines. \#F denotes the FLOPs ratio relative to the Vanilla ($\rho{=}0,K{=}1$) baseline.
% }

\captionof{table}{%
\textbf{Matched-FLOPs comparison.}
We compare AVIS against fixed configurations chosen to approximate its average compute budget (${\sim}45\%$ of Vanilla FLOPs).
Under comparable FLOPs, AVIS consistently outperforms fixed VCS--VRS policies, showing the benefit of input-adaptive allocation.
\#F denotes FLOPs relative to Vanilla ($\rho{=}0,K{=}1$).
}

\resizebox{\linewidth}{!}{%
\begin{tabular}{l|cc|cccc|cc}
\toprule
\multirow{3}{*}{\textbf{Benchmark}}
& \multicolumn{2}{c|}{\textbf{Vanilla}}
& \multicolumn{4}{c|}{\textbf{Fixed}}
& \multicolumn{2}{c}{\cellcolor{winblue}\textbf{AVIS}} \\
\cmidrule(lr){2-3}\cmidrule(lr){4-7}\cmidrule(lr){8-9}
&
\multicolumn{2}{c|}{$\rho{=}0,\;K{=}1$}
& \multicolumn{2}{c}{$\rho{=}0.6,\;K{=}1$}
& \multicolumn{2}{c|}{$\rho{=}0.85,\;K{=}5$}
& \multicolumn{2}{c}{\cellcolor{winblue}Adaptive} \\
\cmidrule(lr){2-3}\cmidrule(lr){4-5}\cmidrule(lr){6-7}\cmidrule(lr){8-9}
& Score & \#F
& Score & \#F
& Score & \#F
& \cellcolor{winblue}Score & \cellcolor{winblue}\#F \\
\midrule
MathVista & 67.5 & 1.0 & 66.6 & 0.48 & 65.2 & 0.45 & \cellcolor{winblue}68.1 & \cellcolor{winblue}0.46 \\
MathVerse & 57.5 & 1.0 & 56.5 & 0.44 & 56.4 & 0.48 & \cellcolor{winblue}58.9 & \cellcolor{winblue}0.45 \\
MME & 2263 & 1.0 & 2230 & 0.43 & 2311 & 0.48 & \cellcolor{winblue}2364 & \cellcolor{winblue}0.42 \\
CVBench & 70.6 & 1.0 & 71.2 & 0.40 & 71.4 & 0.47 & \cellcolor{winblue}73.1 & \cellcolor{winblue}0.43 \\
BLINK & 56.8 & 1.0 & 52.1 & 0.45 & 54.5 & 0.48 & \cellcolor{winblue}56.7 & \cellcolor{winblue}0.42 \\
TreeBench & 37.3 & 1.0 & 36.8 & 0.39 & 35.1 & 0.43 & \cellcolor{winblue}37.5 & \cellcolor{winblue}0.42 \\
\bottomrule
\end{tabular}
}

\vspace{0.4em}

\label{tab:iso_flop}

\end{minipage}
\vspace{-0.5em}
\end{center} 

\subsection{Main Results}
\vspace{-0.5em}
\label{subsec:main_results}

\autoref{tab:main_results} compares AVIS against a broad set of inference policies, including the \emph{Vanilla} baseline, VRS-only and VCS-only variants, and fixed joint VCS--VRS configurations.
\autoref{fig:teaser} summarizes the corresponding Pareto trade-off between performance and compute, while \autoref{tab:iso_flop} provides an iso-compute comparison across policies.

\noindent\textbf{Findings.} Tables~\ref{tab:main_results} and \ref{tab:iso_flop} highlight three main trends:
\begin{itemize}[leftmargin=*,labelindent=0pt,label=\textbf{--}]
    \item As expected, increasing VRS improves accuracy at the cost of higher compute, whereas aggressive VCS reduction degrades performance when applied in isolation. Coupling visual pruning with reasoning scaling yields a better operating point: even fixed joint configurations can outperform the base model while using fewer FLOPs than the baseline.

    \item AVIS consistently provides a stronger compute--performance trade-off. Compared to the \emph{Vanilla} baseline, it improves mean accuracy on image and video benchmarks by 3\% and 2.4\%, respectively, while reducing compute by 52\% and 66\%. Compared to the closest fixed baseline $(\rho{=}0.75, K{=}5)$, AVIS achieves higher accuracy with 40\% lower compute.

    \item Under approximately matched FLOPs, AVIS outperforms the closest competing policy by 3.7\%.
\end{itemize}

\vspace{-0.5em}
\subsection{Latency}
\vspace{-0.5em}
\label{subsec:latency}
\begin{wraptable}{R}{0.4\textwidth}
\centering
\vspace{-1.3em}
\scriptsize
\setlength{\tabcolsep}{4pt}
\renewcommand{\arraystretch}{0.95}

\caption{\textbf{Latency Comparison.} Decoding latency for 100 image-question pairs with shared prefilling. Measurements are obtained on a single NVIDIA L40 (48\,GB) GPU using vLLM v0.10.0~\cite{kwon2023efficient}.}
\vspace{-0.6em}

\resizebox{\linewidth}{!}{%
\begin{tabular}{l|c c|c|c}
\toprule
\multirow{2}{*}{\textbf{Method}} 
& \multicolumn{2}{c|}{\textbf{Setting}} 
& \multirow{2}{*}{\begin{tabular}[c]{@{}c@{}}\textbf{Total time}\\\textbf{(sec.)}\end{tabular}} 
& \multirow{2}{*}{\begin{tabular}[c]{@{}c@{}}\textbf{Reduction}\\\textbf{ratio}\end{tabular}} \\
\cmidrule(lr){2-3}
& $\boldsymbol{\rho}$ & $\boldsymbol{K}$ & & \\
\midrule
\textbf{Vanilla} 
& 0 & 1 & 795 & 1.0 \\
\midrule
% \multirow{3}{*}{\textbf{VRS-Only}} 
% & 0 & 3 & 760 & 1.05 \\
% & 0 & 5 & 810 & 0.98 \\
% & 0 & 7 & 1280 & 0.62 \\

% \multirow{2}{*}{\textbf{KDV-Prune}} 
% & 75\% & 1 & 558 & 1.42 \\
% & 75\% & 5 & 789 & 1.01 \\
\multirow{3}{*}{\textbf{Fixed}} 
& 0\% & 5 & 810 & 0.98 \\
& 75\% & 1 & 558 & 1.42 \\
& 75\% & 5 & 789 & 1.01 \\
\midrule
\textbf{\cellcolor{winblue}AVIS} 
& \multicolumn{2}{c|}{\cellcolor{winblue}Adaptive}
& \cellcolor{winblue}757 & \cellcolor{winblue}1.05 \\

\bottomrule
\end{tabular}
}
\label{tab:latency}
\vspace{-1.9em}
\end{wraptable}

We also measure wall-clock latency to validate the benefits of combining visual context reduction with self-consistency.
Table~\ref{tab:latency} reports decoding time for 100 image--question pairs under shared prefilling.
The Vanilla baseline ($\rho{=}0$, $K{=}1$) takes \textbf{795\,s}.
Among fixed configurations, increasing rollouts ($\rho{=}0\%$, $K{=}5$) slightly exceeds the baseline at \textbf{810\,s}, while KDV pruning ($\rho{=}75\%$, $K{=}1$) reduces latency to \textbf{558\,s}.
Combining both axes ($\rho{=}75\%$, $K{=}5$) brings latency back near baseline at \textbf{789\,s}, while achieving higher accuracy with fewer visual tokens.
AVIS achieves \textbf{757\,s}, confirming that adaptive VCS--VRS yields tangible latency benefits in shared-prefill setups.

% \subsection{Latency}
% \label{subsec:latency}
% \input{tables/latency}

% We also measure wall-clock latency to validate the benefits of combining visual context reduction with self-consistency.
% Table~\ref{tab:latency} reports the decoding time for 100 image--question pairs under shared prefilling.
% The Vanilla baseline ($\rho{=}0$, $K{=}1$) takes \textbf{795\,s} (reduction ratio $1.0$).
% VCS-only with KDV pruning at $\rho{=}75\%$ and $K{=}1$ reduces latency to \textbf{558\,s} (ratio $1.42$), corresponding to roughly a 30\% decrease in wall-clock time.
% The joint configuration $(\rho{=}75\%, K{=}5)$ matches the baseline latency while using fewer visual tokens and achieving higher accuracy. This confirms that coupling VCS with VRS indeed yields tangible latency benefits in shared-prefill setups.

\vspace{-0.5em}
\subsection{RL Post-trained VLMs}
\label{subsec:rl_models}
\vspace{-0.25em}
Recent VLM work improves multimodal reasoning through GRPO-style RL post-training~\cite{guo2025deepseek, huang2025vision-r1,wang2025vl}, reporting gains across image and video reasoning benchmarks.
In \autoref{tab:rl_results}, we evaluate three well-regarded RL post-trained models under a fixed VCS--VRS setting of $(\rho{=}75\%,K{=}5)$ as well as AVIS, and include the baseline Qwen2.5-VL results for reference.
Across models and benchmarks, AVIS preserves or improves accuracy while substantially reducing FLOPs.
Overall, these results show that the benefits from our approach extend well to RL post-trained VLMs.

\begin{table*}[t]
\centering
\small
\setlength{\tabcolsep}{4pt}
\caption{
\textbf{AVIS on RL post-trained VLMs.}
We compare each checkpoint under Vanilla, fixed joint scaling $(\rho{=}75\%,K{=}5)$, and AVIS with adaptive $\rho$ and $K$.
Across 6 image benchmarks, AVIS preserves or improves accuracy with substantially lower FLOPs, showing that adaptive VCS--VRS allocation transfers to RL post-trained VLMs.
}
\resizebox{\textwidth}{!}{%
\begin{tabular}{ll|>{\centering\arraybackslash}p{1cm}>{\centering\arraybackslash}p{1cm}|cc cc cc cc cc cc}
\toprule
\multirow{2}{*}{\textbf{Checkpoint}}
& \multirow{2}{*}{\textbf{Method}} 
& \multicolumn{2}{c|}{\textbf{Setting}}
& \multicolumn{2}{c}{MathVista}
& \multicolumn{2}{c}{MathVision}
& \multicolumn{2}{c}{MME}
& \multicolumn{2}{c}{MMStar}
& \multicolumn{2}{c}{CVBench}
& \multicolumn{2}{c}{POPE} \\
& 
& $\boldsymbol{\rho}$ & $\boldsymbol{K}$ 
& Score & \#F
& Score & \#F
& Score & \#F
& Score & \#F
& Score & \#F
& Score & \#F\\
\midrule
\multirow{4}{*}{\textbf{Qwen 2.5 VL~\cite{bai2025qwen2}}}
& \textbf{Baseline}
& $0\%$ & 1 
& 67.5 & 1.0 & 22.4 & 1.0 
& 2263 & 1.0 & 62.0 & 1.0 & 70.6 & 1.0 & 84.3 & 1.0 \\
\cmidrule(lr){2-16}
& \textbf{Fixed}
& $75\%$ & 5
& 68.9 & 0.83 & 25.4 & 1.04 
& 2373 & 0.58 & 61.5 & 0.79 & 74.3 & 0.55 & 84.7 & 0.50 \\
\cmidrule(lr){2-16}
& \cellcolor{winblue}\textbf{AVIS}
& \multicolumn{2}{c|}{\cellcolor{winblue}Adaptive}
& \cellcolor{winblue}68.1 & \cellcolor{winblue}0.46 & \cellcolor{winblue}25.0 & \cellcolor{winblue}0.78 & \cellcolor{winblue}2364 & \cellcolor{winblue}0.42 & \cellcolor{winblue}61.6 & \cellcolor{winblue}0.52 & \cellcolor{winblue}73.1 & \cellcolor{winblue}0.43 & \cellcolor{winblue}84.5 & \cellcolor{winblue}0.38 \\
\midrule
\multirow{4}{*}{\textbf{VL-Rethinker~\cite{wang2025vl}}}
& \textbf{Baseline}
& $0\%$ & 1 
& 72.2 & 1.0 & 33.2 & 1.0 
& 2328 & 1.0 & 64.6 & 1.0 & 76.9 & 1.0 & 82.9 & 1.0 \\
\cmidrule(lr){2-16}
& \textbf{Fixed}
& $75\%$ & 5
& 69.9 & 0.88 & 36.5 & 1.25 & 2402 & 0.61 & 59.2 & 0.84 & 73.6 & 0.64 & 82.0 & 0.52 \\
\cmidrule(lr){2-16}
& \cellcolor{winblue}\textbf{AVIS}
& \multicolumn{2}{c|}{\cellcolor{winblue}Adaptive}
& \cellcolor{winblue}71.8 & \cellcolor{winblue}0.43 & \cellcolor{winblue}37.3 & \cellcolor{winblue}0.82 & \cellcolor{winblue}2463 & \cellcolor{winblue}0.46 & \cellcolor{winblue}61.3 & \cellcolor{winblue}0.54 & \cellcolor{winblue}76.1 & \cellcolor{winblue}0.42 & \cellcolor{winblue}84.7 & \cellcolor{winblue}0.40 \\
\midrule
\multirow{4}{*}{\textbf{Vision-R1~\cite{huang2025vision-r1}}}
& \textbf{Baseline}
& $0\%$ & 1 
& 68.5 & 1.0 & 39.5 & 1.0 
& 2408 & 1.0 & 63.4 & 1.0 & 72.7 & 1.0 & 88.5 & 1.0 \\
\cmidrule(lr){2-16}
& \textbf{Fixed}
& $75\%$ & 5
& 69.3 & 0.81 & 40.8 & 1.20 
& 2436 & 0.63 & 61.5 & 0.75 & 73.4 & 0.56 & 88.9 & 0.57 \\
\cmidrule(lr){2-16}
& \cellcolor{winblue}\textbf{AVIS}
& \multicolumn{2}{c|}{\cellcolor{winblue}Adaptive}
& \cellcolor{winblue}69.6 & \cellcolor{winblue}0.39 & \cellcolor{winblue}41.1 & \cellcolor{winblue}0.78 & \cellcolor{winblue}2469 & \cellcolor{winblue}0.43 & \cellcolor{winblue}65.1 & \cellcolor{winblue}0.49 & \cellcolor{winblue}74.4 & \cellcolor{winblue}0.43 & \cellcolor{winblue}89.1 & \cellcolor{winblue}0.40 \\
\midrule
\multirow{4}{*}{\textbf{OpenVLThinker~\cite{deng2025openvl}}}
& \textbf{Baseline}
& $0\%$ & 1 
& 68.7 & 1.0 & 27.9 & 1.0 
& 2291 & 1.0 & 65.0 & 1.0 & 72.9 & 1.0 & 82.6 & 1.0 \\
\cmidrule(lr){2-16}
& \textbf{Fixed}
& $75\%$ & 5
& 69.8 & 0.79 & 28.6 & 1.05 
& 2435 & 0.55 & 63.4 & 0.72 & 74.2 & 0.50 & 83.2 & 0.48 \\
\cmidrule(lr){2-16}
& \cellcolor{winblue}\textbf{AVIS}
& \multicolumn{2}{c|}{\cellcolor{winblue}Adaptive}
& \cellcolor{winblue}70.6 & \cellcolor{winblue}0.41 & \cellcolor{winblue}28.4 & \cellcolor{winblue}0.77 & \cellcolor{winblue}2437 & \cellcolor{winblue}0.39 & \cellcolor{winblue}64.2 & \cellcolor{winblue}0.48 & \cellcolor{winblue}74.8 & \cellcolor{winblue}0.39 & \cellcolor{winblue}84.5 & \cellcolor{winblue}0.34 \\
\bottomrule
\end{tabular}
}
\label{tab:rl_results}
\vspace{-1em}
\end{table*}

\vspace{-0.5em}
\subsection{Ablation Studies}
\label{subsec:ablations}
\vspace{-0.25em}

\noindent\textbf{Ablating KDV.}
KDV adapts the retained visual context to each input, preserving tokens around salient regions while removing large, low-information background areas. This behavior supports the premise that substantial visual pruning is possible without discarding evidence needed for downstream reasoning. Quantitatively, KDV also compares favorably against other training-free pruning baselines at a fixed $75\%$ pruning rate, achieving the best average performance and retaining $97.99\%$ of the \emph{Vanilla} accuracy across eight benchmarks. We provide qualitative visualizations of adaptive pruning in \autoref{fig:adaptive_prune} and the full pruning comparison in Appendix~\ref{supp:sec:kdv_pruning}.

% \paragraph{Ablating the difficulty predictor.}
% We analyze the learned VRS controller to verify that AVIS does not simply increase reasoning compute uniformly. As shown in \autoref{fig:judged_k_barplots}, the selected rollout distribution is strongly dataset-dependent: the average rollout count is $2.12$ on MMBench, $2.53$ on MME, and $3.79$ on Video-TT. This indicates that the predictor learns non-trivial compute allocation patterns, assigning $K=1$ to many easy or low-gain examples while reserving larger budgets for samples where additional self-consistency is more likely to help. We further consider various alternative difficulty predictor architectures in \autoref{tab:predictor_arch_ablation}. Compared with Perceiver-~\cite{jaegle2021perceiver} and Transformer-style predictors, our predictor achieves the highest held-out prediction accuracy ($79.2\%$) and uses substantially fewer rollouts at comparable downstream accuracy. For example, on MMBench, it reduces the average rollout count from $3.25$ to $2.12$ relative to the Perceiver predictor, lowering FLOPs from $0.59\times$ to $0.44\times$ while maintaining similar accuracy. These results support the role of the learned difficulty predictor as a compute-allocation mechanism: sharper solvability estimates allow AVIS to avoid unnecessary rollouts while preserving most of the accuracy gains from self-consistency. We provide the full rollout-allocation analysis and predictor variation results in Appendix~\ref{app:adaptive_rollout_allocation} and~\ref{app:difficulty_predictor_ablation}, respectively.

\paragraph{Ablating the difficulty predictor.}
We analyze the learned VRS controller to verify that AVIS does not simply increase reasoning compute uniformly. As shown in \autoref{fig:judged_k_barplots} in the appendix, the selected rollout distribution is strongly dataset-dependent, with average rollout counts of $2.12$ on MMBench, $2.53$ on MME, and $3.79$ on Video-TT. This suggests that the predictor learns non-trivial allocation patterns, assigning $K=1$ to many easy or low-gain examples while reserving larger budgets where self-consistency is more likely to help. We also compare alternative difficulty predictor architectures in \autoref{tab:predictor_arch_ablation}. Compared with Perceiver~\cite{jaegle2021perceiver} and Transformer-style predictors, our predictor achieves the highest held-out accuracy ($79.2\%$) and uses fewer rollouts at comparable downstream accuracy. For example, on MMBench, it reduces the average rollout count from $3.25$ to $2.12$ relative to the Perceiver predictor, lowering FLOPs from $0.59\times$ to $0.44\times$ while maintaining similar accuracy. These results support the difficulty predictor as an effective compute-allocation mechanism: sharper solvability estimates let AVIS avoid unnecessary rollouts while preserving most self-consistency gains. Full rollout-allocation and predictor-architecture results are provided in Appendix~\ref{app:adaptive_rollout_allocation} and~\ref{app:difficulty_predictor_ablation}.

\noindent\textbf{Adaptive vs.\ fixed allocation per axis.}
\begin{table*}[t]
\centering
\small
\setlength{\tabcolsep}{3pt}
\caption{
\textbf{Adaptive vs.\ fixed allocation per axis.}
We compare one-axis adaptive policies against AVIS, which jointly adapts visual context $\rho$ and reasoning budget $K$. 
Joint adaptation achieves the strongest compute--performance trade-off, showing that VCS and VRS are complementary.
}
\setlength{\tabcolsep}{12pt}
\resizebox{\textwidth}{!}{%
\begin{tabular}{l|cc|cc|cccc|cc}
\toprule
\multirow{3}{*}{\textbf{Benchmark}}
& \multicolumn{2}{c|}{\textbf{Vanilla}}
& \multicolumn{2}{c|}{\textbf{Fixed}}
& \multicolumn{4}{c|}{\textbf{One-Axis Adaptive}}
& \multicolumn{2}{c}{\cellcolor{winblue}\textbf{AVIS}} \\
\cmidrule(lr){2-3}\cmidrule(lr){4-5}\cmidrule(lr){6-9}\cmidrule(lr){10-11}
&
\multicolumn{2}{c|}{$\rho{=}0,\;K{=}1$}
& \multicolumn{2}{c|}{$\rho{=}75\%,\;K{=}5$}
& \multicolumn{2}{c}{Adapt-$\rho$, $K{=}5$}
& \multicolumn{2}{c|}{$\rho{=}75\%$, Adapt-$K$}
& \multicolumn{2}{c}{\cellcolor{winblue}Adaptive} \\
\cmidrule(lr){2-3}\cmidrule(lr){4-5}\cmidrule(lr){6-7}\cmidrule(lr){8-9}\cmidrule(lr){10-11}
& Score & \#F
& Score & \#F
& Score & \#F
& Score & \#F
& \cellcolor{winblue}Score & \cellcolor{winblue}\#F \\
\midrule
MathVision & 22.4 & 1.0 & 22.3 & 1.04 & 24.9 & 1.05 & 25.3 & 0.83 & \cellcolor{winblue}25.0 & \cellcolor{winblue}0.78 \\
MME        & 2263 & 1.0 & 2373 & 0.58 & 2347 & 0.52 & 2280 & 0.41 & \cellcolor{winblue}2364 & \cellcolor{winblue}0.42 \\
DocVQA     & 81.4 & 1.0 & 83.97 & 0.36 & 84.7 & 0.35 & 86.2 & 0.31 & \cellcolor{winblue}87.7 & \cellcolor{winblue}0.32 \\
CVBench    & 70.6 & 1.0 & 74.3 & 0.55 & 73.8 & 0.53 & 73.2 & 0.49 & \cellcolor{winblue}73.1 & \cellcolor{winblue}0.43 \\
POPE       & 84.3 & 1.0 & 84.7 & 0.50 & 84.8 & 0.49 & 84.1 & 0.41 & \cellcolor{winblue}84.5 & \cellcolor{winblue}0.38 \\
\bottomrule
\end{tabular}
}
\label{tab:adaptive_ablation}
\vspace{-1em}
\end{table*}
We also evaluate mixed policies that adapt only one axis while keeping the other fixed, as shown in \autoref{tab:adaptive_ablation}. 
These ablations isolate the effects of adaptive visual context allocation and adaptive reasoning allocation. 
Adaptive $\rho$ improves over fixed KDV pruning on several benchmarks, showing that input-dependent pruning retains more useful visual evidence than a global pruning ratio. 
Adaptive $K$ similarly improves the compute--performance trade-off by concentrating rollouts on examples where self-consistency is most beneficial. 
Combining the two yields the strongest overall policy: on DocVQA, AVIS reaches $87.7$, exceeding both Adapt-$\rho$, $K{=}5$ ($84.7$) and $\rho{=}75\%$, Adapt-$K$ ($86.2$), with comparable or lower compute. 
These results show that visual context and reasoning budget are best adapted jointly rather than independently.

\noindent\textbf{VCS--VRS trade-off.}
In \autoref{fig:vcs_vrs_tradeoff}, we sweep the pruning ratio $\rho$ from $0$ to $0.9$ and the rollout count $K$ from $1$ to $9$ (except $K=2$) on the calibration set and plot the resulting accuracy.
Along each row (fixed $\rho$), accuracy generally increases as $K$ grows.
Along each column (fixed $K$), moderate pruning ($\rho \approx 0.2$--$0.7$) keeps the model competitive with, or slightly better than the no-pruning case, indicating the high redundancy and the effectiveness of our KDV.
Only in the extreme regime ($\rho{\ge}0.8$) do we observe clear accuracy degradation, showing that KDV can safely remove a large fraction of visual tokens before performance begins to drop.

\vspace{-0.75em}
\section{Conclusion and discussion}
\vspace{-0.75em}
\label{sec:discussion}
We studied adaptive visual reasoning in VLM inference through the joint allocation of visual context and reasoning compute. We introduced AVIS, a deployment-friendly policy that prunes visual context with lightweight KDV and reallocates the saved compute to parallel self-consistency via a difficulty-aware rollout selector. Across image and video benchmarks, AVIS improves the accuracy--compute trade-off over fixed configurations, showing that visual context and reasoning effort should be treated as coupled inference-time resources rather than independent design choices.

% AVIS is a first instantiation of adaptive VCS--VRS allocation. Predicting per-sample difficulty is itself a hard problem that we do not claim to solve; instead, following recent adaptive-compute work for LLMs~\cite{snell2024scaling-llm,damani2024learning}, we approximate it with a coarse binary solvability signal and a lightweight head over visual embeddings. Although deliberately simple, this design already learns a meaningful approximation as studied in the ablation studies. Two approximations in particular are worth revisiting. First, the predictor uses only visual embeddings: incorporating the question text could, in principle, sharpen difficulty estimates, but doing so via later LLM representations would require running part of the LM prefill before $K$ is decided, eroding the compute savings that motivate our pre-prefill design. Second, AVIS treats visual pruning and reasoning allocation as largely decoupled decisions, even though the retained visual context can directly affect both sample difficulty and the marginal value of additional reasoning. Future work can explore unified, text-aware, and budget-aware policies that jointly select visual context and reasoning effort, directly optimize the compute-constrained objective, and adapt to distribution shifts in deployment workloads.

AVIS is a first instantiation of adaptive VCS--VRS allocation. Predicting per-sample difficulty is itself a challenging problem, which we do not claim to solve fully. Instead, following recent adaptive-compute work for LLMs~\cite{snell2024scaling-llm,damani2024learning}, we approximate difficulty using a coarse binary solvability signal and a lightweight head over visual embeddings. Although deliberately simple, this design already learns a meaningful allocation signal, as shown in our ablation studies. Two approximations are particularly worth revisiting. First, the predictor uses only visual embeddings: incorporating the question text could sharpen difficulty estimates, but using later LLM representations would require running part of the LM prefill before deciding $K$, eroding the compute savings that motivate our pre-prefill design. Second, AVIS treats visual pruning and reasoning allocation as largely decoupled decisions, even though the retained visual context can directly affect both sample difficulty and the marginal value of additional reasoning. Future work can explore unified, text-aware, and budget-aware policies that jointly select visual context and reasoning effort, directly optimize the compute-constrained objective, and adapt to distribution shifts in deployment workloads.

% \clearpage

{
    \small
    \bibliographystyle{unsrt}
    \bibliography{main}

\begin{thebibliography}{10}

\bibitem{bai2025qwen2}
Shuai Bai, Keqin Chen, Xuejing Liu, Jialin Wang, Wenbin Ge, Sibo Song, Kai Dang, Peng Wang, Shijie Wang, Jun Tang, Humen Zhong, Yuanzhi Zhu, Mingkun Yang, Zhaohai Li, Jianqiang Wan, Pengfei Wang, Wei Ding, Zheren Fu, Yiheng Xu, Jiabo Ye, Xi~Zhang, Tianbao Xie, Zesen Cheng, Hang Zhang, Zhibo Yang, Haiyang Xu, and Junyang Lin.
\newblock {Qwen2.5-VL Technical Report}.
\newblock {\em arXiv preprint arXiv:2502.13923}, 2025.

\bibitem{li2024llava}
Bo~Li, Yuanhan Zhang, Dong Guo, Renrui Zhang, Feng Li, Hao Zhang, Kaichen Zhang, Yanwei Li, Ziwei Liu, and Chunyuan Li.
\newblock {LLaVA-OneVision: Easy Visual Task Transfer}.
\newblock {\em Transactions on Machine Learning Research}, 2025.

\bibitem{zhu2025internvl3}
Jinguo Zhu, Weiyun Wang, Zhe Chen, Zhaoyang Liu, Shenglong Ye, Lixin Gu, Hao Tian, Yuchen Duan, Weijie Su, Jie Shao, Zhangwei Gao, Erfei Cui, Xuehui Wang, Yue Cao, Yangzhou Liu, Xingguang Wei, Hongjie Zhang, Haomin Wang, Weiye Xu, Hao Li, Jiahao Wang, Nianchen Deng, Songze Li, Yinan He, Tan Jiang, Jiapeng Luo, Yi~Wang, Conghui He, Botian Shi, Xingcheng Zhang, Wenqi Shao, Junjun He, Yingtong Xiong, Wenwen Qu, Peng Sun, Penglong Jiao, Han Lv, Lijun Wu, Kaipeng Zhang, Huipeng Deng, Jiaye Ge, Kai Chen, Limin Wang, Min Dou, Lewei Lu, Xizhou Zhu, Tong Lu, Dahua Lin, Yu~Qiao, Jifeng Dai, and Wenhai Wang.
\newblock {InternVL3: Exploring Advanced Training and Test-Time Recipes for Open-Source Multimodal Models}.
\newblock {\em arXiv preprint arXiv:2504.10479}, 2025.

\bibitem{guo2025deepseek}
Daya Guo, Dejian Yang, Haowei Zhang, Junxiao Song, Peiyi Wang, Qihao Zhu, Runxin Xu, Ruoyu Zhang, Shirong Ma, Xiao Bi, Xiaokang Zhang, Xingkai Yu, Yu~Wu, Z.~F. Wu, Zhibin Gou, Zhihong Shao, Zhuoshu Li, Ziyi Gao, Aixin Liu, Bing Xue, Bingxuan Wang, Bochao Wu, Bei Feng, Chengda Lu, Chenggang Zhao, Chengqi Deng, Chong Ruan, Damai Dai, Deli Chen, Dongjie Ji, Erhang Li, Fangyun Lin, Fucong Dai, Fuli Luo, Guangbo Hao, Guanting Chen, Guowei Li, H.~Zhang, Hanwei Xu, Honghui Ding, Huazuo Gao, Hui Qu, Hui Li, Jianzhong Guo, Jiashi Li, Jingchang Chen, Jingyang Yuan, Jinhao Tu, Junjie Qiu, Junlong Li, J.~L. Cai, Jiaqi Ni, Jian Liang, Jin Chen, Kai Dong, Kai Hu, Kaichao You, Kaige Gao, Kang Guan, Kexin Huang, Kuai Yu, Lean Wang, Lecong Zhang, Liang Zhao, Litong Wang, Liyue Zhang, Lei Xu, Leyi Xia, Mingchuan Zhang, Minghua Zhang, Minghui Tang, Mingxu Zhou, Meng Li, Miaojun Wang, Mingming Li, Ning Tian, Panpan Huang, Peng Zhang, Qiancheng Wang, Qinyu Chen, Qiushi Du, Ruiqi Ge, Ruisong Zhang, Ruizhe Pan, Runji Wang, R.~J.
  Chen, R.~L. Jin, Ruyi Chen, Shanghao Lu, Shangyan Zhou, Shanhuang Chen, Shengfeng Ye, Shiyu Wang, Shuiping Yu, Shunfeng Zhou, Shuting Pan, S.~S. Li, Shuang Zhou, Shaoqing Wu, Tao Yun, Tian Pei, Tianyu Sun, T.~Wang, Wangding Zeng, Wen Liu, Wenfeng Liang, Wenjun Gao, Wenqin Yu, Wentao Zhang, W.~L. Xiao, Wei An, Xiaodong Liu, Xiaohan Wang, Xiaokang Chen, Xiaotao Nie, Xin Cheng, Xin Liu, Xin Xie, Xingchao Liu, Xinyu Yang, Xinyuan Li, Xuecheng Su, Xuheng Lin, X.~Q. Li, Xiangyue Jin, Xiaojin Shen, Xiaosha Chen, Xiaowen Sun, Xiaoxiang Wang, Xinnan Song, Xinyi Zhou, Xianzu Wang, Xinxia Shan, Y.~K. Li, Y.~Q. Wang, Y.~X. Wei, Yang Zhang, Yanhong Xu, Yao Li, Yao Zhao, Yaofeng Sun, Yaohui Wang, Yi~Yu, Yichao Zhang, Yifan Shi, Yiliang Xiong, Ying He, Yishi Piao, Yisong Wang, Yixuan Tan, Yiyang Ma, Yiyuan Liu, Yongqiang Guo, Yuan Ou, Yuduan Wang, Yue Gong, Yuheng Zou, Yujia He, Yunfan Xiong, Yuxiang Luo, Yuxiang You, Yuxuan Liu, Yuyang Zhou, Y.~X. Zhu, Yanping Huang, Yaohui Li, Yi~Zheng, Yuchen Zhu, Yunxian Ma, Ying
  Tang, Yukun Zha, Yuting Yan, Z.~Z. Ren, Zehui Ren, Zhangli Sha, Zhe Fu, Zhean Xu, Zhenda Xie, Zhengyan Zhang, Zhewen Hao, Zhicheng Ma, Zhigang Yan, Zhiyu Wu, Zihui Gu, Zijia Zhu, Zijun Liu, Zilin Li, Ziwei Xie, Ziyang Song, Zizheng Pan, Zhen Huang, Zhipeng Xu, Zhongyu Zhang, and Zhen Zhang.
\newblock {DeepSeek-R1 Incentivizes Reasoning in LLMs through Reinforcement Learning}.
\newblock {\em Nature}, 645:633–638, 2025.

\bibitem{huang2025vision-r1}
Wenxuan Huang, Bohan Jia, Zijie Zhai, Shaosheng Cao, Zheyu Ye, Fei Zhao, Zhe Xu, Yao Hu, and Shaohui Lin.
\newblock {Vision-R1: Incentivizing Reasoning Capability in Multimodal Large Language Models}.
\newblock In {\em International Conference on Learning Representations (ICLR)}, 2026.

\bibitem{wei2022chain}
Jason Wei, Xuezhi Wang, Dale Schuurmans, Maarten Bosma, Brian Ichter, Fei Xia, Ed~Chi, Quoc Le, and Denny Zhou.
\newblock {Chain-of-Thought Prompting Elicits Reasoning in Large Language Models}.
\newblock In {\em Advances in Neural Information Processing Systems (NeurIPS)}, 2022.

\bibitem{stiennon2020learning}
Nisan Stiennon, Long Ouyang, Jeff Wu, Daniel~M. Ziegler, Ryan Lowe, Chelsea Voss, Alec Radford, Dario Amodei, and Paul Christiano.
\newblock {Learning to Summarize with Human Feedback}.
\newblock In {\em Advances in Neural Information Processing Systems (NeurIPS)}, 2020.

\bibitem{ichihara2025evaluation}
Yuki Ichihara, Yuu Jinnai, Tetsuro Morimura, Kaito Ariu, Kenshi Abe, Mitsuki Sakamoto, and Eiji Uchibe.
\newblock {Evaluation of Best-of-N Sampling Strategies for Language Model Alignment}.
\newblock {\em Transactions on Machine Learning Research}, 2025.

\bibitem{wang2022self}
Xuezhi Wang, Jason Wei, Dale Schuurmans, Quoc Le, Ed~Chi, Sharan Narang, Aakanksha Chowdhery, and Denny Zhou.
\newblock {Self-Consistency Improves Chain of Thought Reasoning in Language Models}.
\newblock In {\em International Conference on Learning Representations (ICLR)}, 2023.

\bibitem{chen2024image}
Liang Chen, Haozhe Zhao, Tianyu Liu, Shuai Bai, Junyang Lin, Chang Zhou, and Baobao Chang.
\newblock {An Image is Worth 1/2 Tokens After Layer 2: Plug-and-Play Inference Acceleration for Large Vision-Language Models}.
\newblock In {\em European Conference on Computer Vision}, pages 19--35. Springer, 2024.

\bibitem{shang2025llava}
Yuzhang Shang, Mu~Cai, Bingxin Xu, Yong~Jae Lee, and Yan Yan.
\newblock {LLaVA-PruMerge: Adaptive Token Reduction for Efficient Large Multimodal Models}.
\newblock In {\em Proceedings of the IEEE/CVF International Conference on Computer Vision}, pages 22857--22867, 2025.

\bibitem{ahmadpour2025limits}
Mohammadjavad Ahmadpour, Amirmahdi Meighani, Payam Taebi, Omid Ghahroodi, Amirmohammad Izadi, and Mahdieh~Soleymani Baghshah.
\newblock {Limits and Gains of Test-Time Scaling in Vision-Language Reasoning}.
\newblock {\em arXiv preprint arXiv:2512.11109}, 2025.

\bibitem{liu2023visual}
Haotian Liu, Chunyuan Li, Qingyang Wu, and Yong~Jae Lee.
\newblock {Visual Instruction Tuning}.
\newblock In {\em Advances in Neural Information Processing Systems (NeurIPS)}, 2023.

\bibitem{li2023blip}
Junnan Li, Dongxu Li, Silvio Savarese, and Steven Hoi.
\newblock {BLIP-2: Bootstrapping Language-Image Pre-training with Frozen Image Encoders and Large Language Models}.
\newblock In {\em {International Conference on Machine Learning}}, 2023.

\bibitem{liu2024llavanext}
Haotian Liu, Chunyuan Li, Yuheng Li, Bo~Li, Yuanhan Zhang, Sheng Shen, and Yong~Jae Lee.
\newblock {LLaVA-NeXT: Improved Reasoning, OCR, and World Knowledge}, January 2024.

\bibitem{xu2025llava}
Guowei Xu, Peng Jin, Ziang Wu, Hao Li, Yibing Song, Lichao Sun, and Li~Yuan.
\newblock {LLaVA-CoT: Let Vision Language Models Reason Step-by-Step}.
\newblock In {\em Proceedings of the IEEE/CVF International Conference on Computer Vision}, pages 2087--2098, 2025.

\bibitem{yao2024mulberry}
Huanjin Yao, Jiaxing Huang, Wenhao Wu, Jingyi Zhang, Yibo Wang, Shunyu Liu, Yingjie Wang, Yuxin Song, Haocheng Feng, Li~Shen, and Dacheng Tao.
\newblock {Mulberry: Empowering MLLM with o1-like Reasoning and Reflection via Collective Monte Carlo Tree Search}.
\newblock In {\em Advances in Neural Information Processing Systems (NeurIPS)}, 2025.

\bibitem{wang2025vl}
Haozhe Wang, Chao Qu, Zuming Huang, Wei Chu, Fangzhen Lin, and Wenhu Chen.
\newblock {VL-Rethinker: Incentivizing Self-Reflection of Vision-Language Models with Reinforcement Learning}.
\newblock In {\em Advances in Neural Information Processing Systems (NeurIPS)}, 2025.

\bibitem{jeddi2025puzzle}
Ahmadreza Jeddi, Hakki~Can Karaimer, Hue Nguyen, Zhongling Wang, Ke~Zhao, Javad Rajabi, Ran Zhang, Raghav Goyal, Babak Taati, and Radek Grzeszczuk.
\newblock {Puzzle Curriculum GRPO for Vision-Centric Reasoning}.
\newblock {\em arXiv preprint arXiv:2512.14944}, 2025.

\bibitem{shen2025vlm}
Haozhan Shen, Peng Liu, Jingcheng Li, Chunxin Fang, Yibo Ma, Jiajia Liao, Qiaoli Shen, Zilun Zhang, Kangjia Zhao, Qianqian Zhang, Ruochen Xu, and Tiancheng Zhao.
\newblock {VLM-R1: A Stable and Generalizable R1-style Large Vision-Language Model}.
\newblock {\em arXiv preprint arXiv:2504.07615}, 2025.

\bibitem{zhang2025r1}
Jingyi Zhang, Jiaxing Huang, Huanjin Yao, Shunyu Liu, Xikun Zhang, Shijian Lu, and Dacheng Tao.
\newblock {R1-VL: Learning to Reason with Multimodal Large Language Models via Step-wise Group Relative Policy Optimization}.
\newblock In {\em Proceedings of the IEEE/CVF International Conference on Computer Vision (ICCV)}, 2025.

\bibitem{jeddi2026does}
Ahmadreza Jeddi, Kimia Shaban, Negin Baghbanzadeh, Natasha Sharan, Abhishek Moturu, Elham Dolatabadi, and Babak Taati.
\newblock {When Does RL Help Medical VLMs? Disentangling Vision, SFT, and RL Gains}.
\newblock {\em arXiv preprint arXiv:2603.01301}, 2026.

\bibitem{chen2021evaluating}
Mark Chen, Jerry Tworek, Heewoo Jun, Qiming Yuan, Henrique~Ponde de~Oliveira~Pinto, Jared Kaplan, Harri Edwards, Yuri Burda, Nicholas Joseph, Greg Brockman, Alex Ray, Raul Puri, Gretchen Krueger, Michael Petrov, Heidy Khlaaf, Girish Sastry, Pamela Mishkin, Brooke Chan, Scott Gray, Nick Ryder, Mikhail Pavlov, Alethea Power, Lukasz Kaiser, Mohammad Bavarian, Clemens Winter, Philippe Tillet, Felipe~Petroski Such, Dave Cummings, Matthias Plappert, Fotios Chantzis, Elizabeth Barnes, Ariel Herbert-Voss, William~Hebgen Guss, Alex Nichol, Alex Paino, Nikolas Tezak, Jie Tang, Igor Babuschkin, Suchir Balaji, Shantanu Jain, William Saunders, Christopher Hesse, Andrew~N. Carr, Jan Leike, Josh Achiam, Vedant Misra, Evan Morikawa, Alec Radford, Matthew Knight, Miles Brundage, Mira Murati, Katie Mayer, Peter Welinder, Bob McGrew, Dario Amodei, Sam McCandlish, Ilya Sutskever, and Wojciech Zaremba.
\newblock {Evaluating Large Language Models Trained on Code}.
\newblock {\em arXiv preprint arXiv:2107.03374}, 2021.

\bibitem{brown2024large}
Bradley Brown, Jordan Juravsky, Ryan Ehrlich, Ronald Clark, Quoc~V Le, Christopher R{\'e}, and Azalia Mirhoseini.
\newblock {Large Language Monkeys: Scaling Inference Compute with Repeated Sampling}.
\newblock {\em arXiv preprint arXiv:2407.21787}, 2024.

\bibitem{rakhsha2025majority}
Amin Rakhsha, Kanika Madan, Tianyu Zhang, Amir massoud Farahmand, and Amir Khasahmadi.
\newblock {Majority of the Bests: Improving Best-of-N via Bootstrapping}.
\newblock In {\em Advances in Neural Information Processing Systems (NeurIPS)}, 2025.

\bibitem{madaan2023self}
Aman Madaan, Niket Tandon, Prakhar Gupta, Skyler Hallinan, Luyu Gao, Sarah Wiegreffe, Uri Alon, Nouha Dziri, Shrimai Prabhumoye, Yiming Yang, Shashank Gupta, Bodhisattwa~Prasad Majumder, Katherine Hermann, Sean Welleck, Amir Yazdanbakhsh, and Peter Clark.
\newblock {Self-Refine: Iterative Refinement with Self-Feedback}.
\newblock In {\em Advances in Neural Information Processing Systems (NeurIPS)}, 2023.

\bibitem{lightman2023let}
Hunter Lightman, Vineet Kosaraju, Yuri Burda, Harrison Edwards, Bowen Baker, Teddy Lee, Jan Leike, John Schulman, Ilya Sutskever, and Karl Cobbe.
\newblock {Let's Verify Step by Step}.
\newblock In {\em The Twelfth International Conference on Learning Representations}, 2023.

\bibitem{wang2024math}
Peiyi Wang, Lei Li, Zhihong Shao, Runxin Xu, Damai Dai, Yifei Li, Deli Chen, Yu~Wu, and Zhifang Sui.
\newblock {Math-Shepherd: Verify and Reinforce LLMs Step-by-step without Human Annotations}.
\newblock In {\em Proceedings of the 62nd Annual Meeting of the Association for Computational Linguistics (Volume 1: Long Papers)}, pages 9426--9439, 2024.

\bibitem{muennighoff2025s1}
Niklas Muennighoff, Zitong Yang, Weijia Shi, Xiang~Lisa Li, Li~Fei-Fei, Hannaneh Hajishirzi, Luke Zettlemoyer, Percy Liang, Emmanuel Cand{\`e}s, and Tatsunori~B Hashimoto.
\newblock {s1: Simple Test-Time Scaling}.
\newblock In {\em Conference on Empirical Methods in Natural Language Processing (EMNLP)}, 2025.

\bibitem{li2025selfbudgeter}
Zheng Li, Qingxiu Dong, Jingyuan Ma, Di~Zhang, Kai Jia, and Zhifang Sui.
\newblock {SelfBudgeter: Adaptive Token Allocation for Efficient LLM Reasoning}.
\newblock {\em arXiv preprint arXiv:2505.11274}, 2025.

\bibitem{manvi2024adaptive}
Rohin Manvi, Anikait Singh, and Stefano Ermon.
\newblock {Adaptive Inference-Time Compute: LLMs Can Predict if They Can Do Better, Even Mid-Generation}.
\newblock {\em arXiv preprint arXiv:2410.02725}, 2024.

\bibitem{snell2024scaling-llm}
Charlie Snell, Jaehoon Lee, Kelvin Xu, and Aviral Kumar.
\newblock {Scaling LLM Test-Time Compute Optimally can be More Effective than Scaling Model Parameters}.
\newblock In {\em International Conference on Learning Representations (ICLR)}, 2025.

\bibitem{damani2024learning}
Mehul Damani, Idan Shenfeld, Andi Peng, Andreea Bobu, and Jacob Andreas.
\newblock {Learning How Hard to Think: Input-Adaptive Allocation of LM Computation}.
\newblock In {\em International Conference on Learning Representations (ICLR)}, 2025.

\bibitem{meng2025cyberv}
Jiahao Meng, Shuyang Sun, Yue Tan, Lu~Qi, Yunhai Tong, Xiangtai Li, and Longyin Wen.
\newblock {CyberV: Cybernetics for Test-time Scaling in Video Understanding}.
\newblock {\em arXiv preprint arXiv:2506.07971}, 2025.

\bibitem{jin2025cotvid}
Hongbo Jin, Jiayu Ding, Siyi Xie, Guibo Luo, and Ge~Li.
\newblock {VISTA: Mitigating Semantic Inertia in Video-LLMs via Training-Free Dynamic Chain-of-Thought Routing}.
\newblock {\em arXiv preprint arXiv:2505.11830}, 2025.

\bibitem{huang2026learning}
Yixu Huang, Tinghui Zhu, and Muhao Chen.
\newblock {Learning Adaptive Reasoning Paths for Efficient Visual Reasoning}.
\newblock {\em arXiv preprint arXiv:2604.14568}, 2026.

\bibitem{chen2025ares}
Shuang Chen, Yue Guo, Yimeng Ye, Shijue Huang, Wenbo Hu, Haoxi Li, Manyuan Zhang, Jiayu Chen, Song Guo, and Nanyun Peng.
\newblock {ARES: Multimodal Adaptive Reasoning via Difficulty-Aware Token-Level Entropy Shaping}.
\newblock In {\em International Conference on Learning Representations (ICLR)}, 2026.

\bibitem{kaya2025efficient}
Mehmet~Onurcan Kaya, Desmond Elliott, and Dim~P Papadopoulos.
\newblock {Efficient Test-Time Scaling for Small Vision-Language Models}.
\newblock In {\em International Conference on Learning Representations (ICLR)}, 2026.

\bibitem{zhang2025improve}
Ruohong Zhang, Bowen Zhang, Yanghao Li, Haotian Zhang, Zhiqing Sun, Zhe Gan, Yinfei Yang, Ruoming Pang, and Yiming Yang.
\newblock {Improve Vision Language Model Chain-of-thought Reasoning}.
\newblock In {\em Proceedings of the 63rd Annual Meeting of the Association for Computational Linguistics (Volume 1: Long Papers)}, pages 1631--1662, 2025.

\bibitem{wu2025aha}
Mingyuan Wu, Meitang Li, Jingcheng Yang, Jize Jiang, Kaizhuo Yan, Zhaoheng Li, Hanchao Yu, Minjia Zhang, and Klara Nahrstedt.
\newblock {Aha Moment Revisited: Are VLMs Truly Capable of Self Verification in Inference-time Scaling?}
\newblock {\em arXiv preprint arXiv:2506.17417}, 2025.

\bibitem{jeddi2025similarity}
Ahmadreza Jeddi, Negin Baghbanzadeh, Elham Dolatabadi, and Babak Taati.
\newblock {Similarity-Aware Token Pruning: Your VLM but Faster}.
\newblock {\em arXiv preprint arXiv:2503.11549}, 2025.

\bibitem{xing2024pyramiddrop}
Long Xing, Qidong Huang, Xiaoyi Dong, Jiajie Lu, Pan Zhang, Yuhang Zang, Yuhang Cao, Conghui He, Jiaqi Wang, Feng Wu, and Dahua Lin.
\newblock {PyramidDrop: Accelerating Your Large Vision-Language Models via Pyramid Visual Redundancy Reduction}.
\newblock In {\em Proceedings of the IEEE/CVF International Conference on Computer Vision}, 2025.

\bibitem{zhang2024sparsevlm}
Yuan Zhang, Chun-Kai Fan, Junpeng Ma, Wenzhao Zheng, Tao Huang, Kuan Cheng, Denis Gudovskiy, Tomoyuki Okuno, Yohei Nakata, Kurt Keutzer, and Shanghang Zhang.
\newblock {SparseVLM: Visual Token Sparsification for Efficient Vision-Language Model Inference}.
\newblock In {\em {International Conference on Machine Learning}}, 2025.

\bibitem{yang2025visionzip}
Senqiao Yang, Yukang Chen, Zhuotao Tian, Chengyao Wang, Jingyao Li, Bei Yu, and Jiaya Jia.
\newblock {VisionZip: Longer is Better but Not Necessary in Vision Language Models}.
\newblock In {\em Proceedings of the Computer Vision and Pattern Recognition Conference}, pages 19792--19802, 2025.

\bibitem{huang2025prunevid}
Xiaohu Huang, Hao Zhou, and Kai Han.
\newblock {PruneVid: Visual Token Pruning for Efficient Video Large Language Models}.
\newblock In {\em Findings of the Association for Computational Linguistics: ACL 2025}, pages 19959--19973, 2025.

\bibitem{bolya2022token}
Daniel Bolya, Cheng-Yang Fu, Xiaoliang Dai, Peizhao Zhang, Christoph Feichtenhofer, and Judy Hoffman.
\newblock {Token Merging: Your ViT But Faster}.
\newblock In {\em International Conference on Learning Representations (ICLR)}, 2023.

\bibitem{alvar2025divprune}
Saeed~Ranjbar Alvar, Gursimran Singh, Mohammad Akbari, and Yong Zhang.
\newblock {DivPrune: Diversity-based Visual Token Pruning for Large Multimodal Models}.
\newblock In {\em Proceedings of the Computer Vision and Pattern Recognition Conference}, pages 9392--9401, 2025.

\bibitem{zhang2025beyond-attention}
Qizhe Zhang, Mengzhen Liu, Lichen Li, Ming Lu, Yuan Zhang, Junwen Pan, Qi~She, and Shanghang Zhang.
\newblock {Beyond Attention or Similarity: Maximizing Conditional Diversity for Token Pruning in MLLMs}.
\newblock In {\em Advances in Neural Information Processing Systems (NeurIPS)}, 2025.

\bibitem{zhang2025beyond-text}
Qizhe Zhang, Aosong Cheng, Ming Lu, Renrui Zhang, Zhiyong Zhuo, Jiajun Cao, Shaobo Guo, Qi~She, and Shanghang Zhang.
\newblock {Beyond Text-Visual Attention: Exploiting Visual Cues for Effective Token Pruning in VLMs}.
\newblock In {\em Proceedings of the IEEE/CVF International Conference on Computer Vision}, 2025.

\bibitem{yang2025topv}
Cheng Yang, Yang Sui, Jinqi Xiao, Lingyi Huang, Yu~Gong, Chendi Li, Jinghua Yan, Yu~Bai, Ponnuswamy Sadayappan, Xia Hu, and Bo~Yuan.
\newblock {TopV: Compatible Token Pruning with Inference Time Optimization for Fast and Low-Memory Multimodal Vision Language Model}.
\newblock In {\em Proceedings of the Computer Vision and Pattern Recognition Conference}, 2025.

\bibitem{li2025catp}
Yanshu Li, Jianjiang Yang, Zhennan Shen, Ligong Han, Haoyan Xu, and Ruixiang Tang.
\newblock {CATP: Contextually Adaptive Token Pruning for Efficient and Enhanced Multimodal In-Context Learning}.
\newblock In {\em Proceedings of the AAAI Conference on Artificial Intelligence}, 2026.

\bibitem{dao2022flashattention}
Tri Dao, Dan Fu, Stefano Ermon, Atri Rudra, and Christopher R{\'e}.
\newblock {FlashAttention: Fast and Memory-Efficient Exact Attention with IO-Awareness}.
\newblock In {\em Advances in Neural Information Processing Systems (NeurIPS)}, 2022.

\bibitem{zhang2025vscan}
Ce~Zhang, Kaixin Ma, Tianqing Fang, Wenhao Yu, Hongming Zhang, Zhisong Zhang, Yaqi Xie, Katia Sycara, Haitao Mi, and Dong Yu.
\newblock {VScan: Rethinking Visual Token Reduction for Efficient Large Vision-Language Models}.
\newblock {\em Transactions on Machine Learning Research}, 2026.

\bibitem{park2025keydiff}
Junyoung Park, Dalton Jones, Matthew~J Morse, Raghavv Goel, Mingu Lee, and Chris Lott.
\newblock {KeyDiff: Key Similarity-Based KV Cache Eviction for Long-Context LLM Inference in Resource-Constrained Environments}.
\newblock In {\em Advances in Neural Information Processing Systems (NeurIPS)}, 2025.

\bibitem{duan2024vlmevalkit}
Haodong Duan, Xinyu Fang, Junming Yang, Xiangyu Zhao, Yuxuan Qiao, Mo~Li, Amit Agarwal, Zhe Chen, Lin Chen, Yuan Liu, Yubo Ma, Hailong Sun, Yifan Zhang, Shiyin Lu, Tack~Hwa Wong, Weiyun Wang, Peiheng Zhou, Xiaozhe Li, Chaoyou Fu, Junbo Cui, Jixuan Chen, Enxin Song, Song Mao, Shengyuan Ding, Tianhao Liang, Zicheng Zhang, Xiaoyi Dong, Yuhang Zang, Pan Zhang, Jiaqi Wang, Dahua Lin, and Kai Chen.
\newblock {VLMEvalKit: An Open-Source Toolkit for Evaluating Large Multi-Modality Models}.
\newblock In {\em Proceedings of the ACM International Conference on Multimedia}, 2024.

\bibitem{lu2023mathvista}
Pan Lu, Hritik Bansal, Tony Xia, Jiacheng Liu, Chunyuan Li, Hannaneh Hajishirzi, Hao Cheng, Kai-Wei Chang, Michel Galley, and Jianfeng Gao.
\newblock {MathVista: Evaluating Mathematical Reasoning of Foundation Models in Visual Contexts}.
\newblock In {\em International Conference on Learning Representations (ICLR)}, 2024.

\bibitem{zhang2024mathverse}
Renrui Zhang, Dongzhi Jiang, Yichi Zhang, Haokun Lin, Ziyu Guo, Pengshuo Qiu, Aojun Zhou, Pan Lu, Kai-Wei Chang, Peng Gao, and Hongsheng Li.
\newblock {MathVerse: Does Your Multi-modal LLM Truly See the Diagrams in Visual Math Problems?}
\newblock In {\em European Conference on Computer Vision}, 2024.

\bibitem{wang2024measuring}
Ke~Wang, Junting Pan, Weikang Shi, Zimu Lu, Houxing Ren, Aojun Zhou, Mingjie Zhan, and Hongsheng Li.
\newblock {Measuring Multimodal Mathematical Reasoning with MATH-Vision Dataset}.
\newblock {\em Advances in Neural Information Processing Systems (NeurIPS)}, 37:95095--95169, 2024.

\bibitem{mathew2021docvqa}
Minesh Mathew, Dimosthenis Karatzas, and CV~Jawahar.
\newblock {DocVQA: A Dataset for VQA on Document Images}.
\newblock In {\em Proceedings of the IEEE/CVF Winter Conference on Applications of Computer Vision (WACV)}, 2021.

\bibitem{yue2025mmmu}
Xiang Yue, Tianyu Zheng, Yuansheng Ni, Yubo Wang, Kai Zhang, Shengbang Tong, Yuxuan Sun, Botao Yu, Ge~Zhang, Huan Sun, Yu~Su, Wenhu Chen, and Graham Neubig.
\newblock {MMMU-Pro: A More Robust Multi-discipline Multimodal Understanding Benchmark}.
\newblock In {\em Proceedings of the 63rd Annual Meeting of the Association for Computational Linguistics (Volume 1: Long Papers)}, pages 15134--15186, 2025.

\bibitem{fu2025mme}
Chaoyou Fu, Peixian Chen, Yunhang Shen, Yulei Qin, Mengdan Zhang, Xu~Lin, Jinrui Yang, Xiawu Zheng, Ke~Li, Xing Sun, Yunsheng Wu, Rongrong Ji, Caifeng Shan, and Ran He.
\newblock {MME: A Comprehensive Evaluation Benchmark for Multimodal Large Language Models}.
\newblock In {\em Advances in Neural Information Processing Systems (NeurIPS)}, 2025.

\bibitem{chen2024we}
Lin Chen, Jinsong Li, Xiaoyi Dong, Pan Zhang, Yuhang Zang, Zehui Chen, Haodong Duan, Jiaqi Wang, Yu~Qiao, Dahua Lin, and Feng Zhao.
\newblock {Are We on the Right Way for Evaluating Large Vision-Language Models?}
\newblock In {\em Advances in Neural Information Processing Systems (NeurIPS)}, 2024.

\bibitem{liu2024mmbench}
Yuan Liu, Haodong Duan, Yuanhan Zhang, Bo~Li, Songyang Zhang, Wangbo Zhao, Yike Yuan, Jiaqi Wang, Conghui He, Ziwei Liu, Kai Chen, and Dahua Lin.
\newblock {MMBench: Is Your Multi-modal Model an All-around Player?}
\newblock In {\em European Conference on Computer Vision}, 2024.

\bibitem{tong2024cambrian}
Shengbang Tong, Ellis Brown, Penghao Wu, Sanghyun Woo, Manoj Middepogu, Sai~Charitha Akula, Jihan Yang, Shusheng Yang, Adithya Iyer, Xichen Pan, Ziteng Wang, Rob Fergus, Yann LeCun, and Saining Xie.
\newblock {Cambrian-1: A Fully Open, Vision-Centric Exploration of Multimodal LLMs}.
\newblock In {\em Advances in Neural Information Processing Systems (NeurIPS)}, 2024.

\bibitem{li2023evaluating}
Yifan Li, Yifan Du, Kun Zhou, Jinpeng Wang, Wayne~Xin Zhao, and Ji-Rong Wen.
\newblock {Evaluating Object Hallucination in Large Vision-Language Models}.
\newblock In {\em Conference on Empirical Methods in Natural Language Processing (EMNLP)}, 2023.

\bibitem{fu2024blink}
Xingyu Fu, Yushi Hu, Bangzheng Li, Yu~Feng, Haoyu Wang, Xudong Lin, Dan Roth, Noah~A. Smith, Wei-Chiu Ma, and Ranjay Krishna.
\newblock {BLINK: Multimodal Large Language Models Can See but Not Perceive}.
\newblock In {\em European Conference on Computer Vision}, 2024.

\bibitem{wang2026treebench}
Haochen Wang, Xiangtai Li, Zilong Huang, Anran Wang, Jiacong Wang, Tao Zhang, Jiani Zheng, Sule Bai, Zijian Kang, Jiashi Feng, Zhuochen Wang, and Zhaoxiang Zhang.
\newblock {Traceable Evidence Enhanced Visual Grounded Reasoning: Evaluation and Methodology}.
\newblock In {\em International Conference on Learning Representations (ICLR)}, 2026.

\bibitem{fu2025video}
Chaoyou Fu, Yuhan Dai, Yongdong Luo, Lei Li, Shuhuai Ren, Renrui Zhang, Zihan Wang, Chenyu Zhou, Yunhang Shen, Mengdan Zhang, Peixian Chen, Yanwei Li, Shaohui Lin, Sirui Zhao, Ke~Li, Tong Xu, Xiawu Zheng, Enhong Chen, Caifeng Shan, Ran He, and Xing Sun.
\newblock {Video-MME: The First-Ever Comprehensive Evaluation Benchmark of Multi-modal LLMs in Video Analysis}.
\newblock In {\em Proceedings of the Computer Vision and Pattern Recognition Conference}, 2025.

\bibitem{liu2024tempcompass}
Yuanxin Liu, Shicheng Li, Yi~Liu, Yuxiang Wang, Shuhuai Ren, Lei Li, Sishuo Chen, Xu~Sun, and Lu~Hou.
\newblock {TempCompass: Do Video LLMs Really Understand Videos?}
\newblock In {\em Findings of the Association for Computational Linguistics: ACL 2024}, pages 8731--8772, 2024.

\bibitem{zhang2025towards}
Yuanhan Zhang, Yunice Chew, Yuhao Dong, Aria Leo, Bo~Hu, and Ziwei Liu.
\newblock {Towards Video Thinking Test: A Holistic Benchmark for Advanced Video Reasoning and Understanding}.
\newblock In {\em Proceedings of the IEEE/CVF International Conference on Computer Vision}, 2025.

\bibitem{li2024mvbench}
Kunchang Li, Yali Wang, Yinan He, Yizhuo Li, Yi~Wang, Yi~Liu, Zun Wang, Jilan Xu, Guo Chen, Ping Luo, Limin Wang, and Yu~Qiao.
\newblock {MVBench: A Comprehensive Multi-modal Video Understanding Benchmark}.
\newblock In {\em Proceedings of the IEEE/CVF Conference on Computer Vision and Pattern Recognition}, 2024.

\bibitem{zhang2025q}
Zicheng Zhang, Ziheng Jia, Haoning Wu, Chunyi Li, Zijian Chen, Yingjie Zhou, Wei Sun, Xiaohong Liu, Xiongkuo Min, Weisi Lin, and Guangtao Zhai.
\newblock {Q-Bench-Video: Benchmark the Video Quality Understanding of LMMs}.
\newblock In {\em Proceedings of the Computer Vision and Pattern Recognition Conference}, 2025.

\bibitem{hu2025video}
Kairui Hu, Penghao Wu, Fanyi Pu, Wang Xiao, Yuanhan Zhang, Xiang Yue, Bo~Li, and Ziwei Liu.
\newblock {Video-MMMU: Evaluating Knowledge Acquisition from Multi-Discipline Professional Videos}.
\newblock {\em arXiv preprint arXiv:2501.13826}, 2025.

\bibitem{kwon2023efficient}
Woosuk Kwon, Zhuohan Li, Siyuan Zhuang, Ying Sheng, Lianmin Zheng, Cody~Hao Yu, Joseph~E. Gonzalez, Hao Zhang, and Ion Stoica.
\newblock Efficient memory management for large language model serving with pagedattention.
\newblock In {\em Proceedings of the ACM SIGOPS 29th Symposium on Operating Systems Principles}, 2023.

\bibitem{deng2025openvl}
Yihe Deng, Hritik Bansal, Fan Yin, Nanyun Peng, Wei Wang, and Kai-Wei Chang.
\newblock {OpenVLThinker: Complex Vision-Language Reasoning via Iterative SFT-RL Cycles}.
\newblock In {\em Advances in Neural Information Processing Systems (NeurIPS)}, 2025.

\bibitem{jaegle2021perceiver}
Andrew Jaegle, Felix Gimeno, Andy Brock, Oriol Vinyals, Andrew Zisserman, and Joao Carreira.
\newblock Perceiver: General perception with iterative attention.
\newblock In {\em International conference on machine learning}, pages 4651--4664. PMLR, 2021.

\bibitem{zhang2024video}
Yuanhan Zhang, Jinming Wu, Wei Li, Bo~Li, Zejun Ma, Ziwei Liu, and Chunyuan Li.
\newblock {LLaVA-Video: Video Instruction Tuning With Synthetic Data}.
\newblock {\em Transactions on Machine Learning Research}, 2025.

\bibitem{schwenk2022okvqa}
Dustin Schwenk, Apoorv Khandelwal, Christopher Clark, Kenneth Marino, and Roozbeh Mottaghi.
\newblock {A-OKVQA: A Benchmark for Visual Question Answering using World Knowledge}.
\newblock In {\em European Conference on Computer Vision}, 2022.

\bibitem{kembhavi2016diagram}
Aniruddha Kembhavi, Mike Salvato, Eric Kolve, Minjoon Seo, Hannaneh Hajishirzi, and Ali Farhadi.
\newblock {A Diagram Is Worth A Dozen Images}.
\newblock In {\em European Conference on Computer Vision}, 2016.

\bibitem{kembhavi2017you}
Aniruddha Kembhavi, Minjoon Seo, Dustin Schwenk, Jonghyun Choi, Ali Farhadi, and Hannaneh Hajishirzi.
\newblock {Are You Smarter Than a Sixth Grader? Textbook Question Answering for Multimodal Machine Comprehension}.
\newblock In {\em Proceedings of the IEEE Conference on Computer Vision and Pattern Recognition}, 2017.

\bibitem{li2023seed}
Bohao Li, Rui Wang, Guangzhi Wang, Yuying Ge, Yixiao Ge, and Ying Shan.
\newblock {SEED-Bench: Benchmarking Multimodal LLMs with Generative Comprehension}.
\newblock In {\em Proceedings of the IEEE/CVF Conference on Computer Vision and Pattern Recognition}, 2024.

\bibitem{lu2022learn}
Pan Lu, Swaroop Mishra, Tanglin Xia, Liang Qiu, Kai-Wei Chang, Song-Chun Zhu, Oyvind Tafjord, Peter Clark, and Ashwin Kalyan.
\newblock {Learn to Explain: Multimodal Reasoning via Thought Chains for Science Question Answering}.
\newblock {\em Advances in Neural Information Processing Systems}, 35:2507--2521, 2022.

\bibitem{masry2022chartqa}
Ahmed Masry, Xuan~Long Do, Jia~Qing Tan, Shafiq Joty, and Enamul Hoque.
\newblock {ChartQA: A Benchmark for Question Answering about Charts with Visual and Logical Reasoning}.
\newblock In {\em Findings of the Association for Computational Linguistics: ACL 2022}, 2022.

\bibitem{yue2024mmmu}
Xiang Yue, Yuansheng Ni, Kai Zhang, Tianyu Zheng, Ruoqi Liu, Ge~Zhang, Samuel Stevens, Dongfu Jiang, Weiming Ren, Yuxuan Sun, Cong Wei, Botao Yu, Ruibin Yuan, Renliang Sun, Ming Yin, Boyuan Zheng, Zhenzhu Yang, Yibo Liu, Wenhao Huang, Huan Sun, Yu~Su, and Wenhu Chen.
\newblock {MMMU: A Massive Multi-discipline Multimodal Understanding and Reasoning Benchmark for Expert AGI}.
\newblock In {\em Proceedings of the IEEE/CVF Conference on Computer Vision and Pattern Recognition}, 2024.

\bibitem{hudson2019gqa}
Drew~A Hudson and Christopher~D Manning.
\newblock {GQA: A New Dataset for Real-World Visual Reasoning and Compositional Question Answering}.
\newblock In {\em Proceedings of the IEEE/CVF Conference on Computer Vision and Pattern Recognition}, 2019.

\end{thebibliography}
}

% Hide main-paper sections from the appendix TOC
\addtocontents{toc}{\protect\setcounter{tocdepth}{-1}}

\newpage
\tableofcontents
\clearpage

% Add only the appendix heading and appendix sections to the TOC
\phantomsection
\addcontentsline{toc}{section}{Appendix}
\addtocontents{toc}{\protect\setcounter{tocdepth}{2}}

%-------------------------------------------------------------------------------------------------
\label{appendix}
\appendix
% \onecolumn
\section{Proofs}
\label{supp:sec:proofs}

\subsection{Token-linearity of Language Model FLOPs}
\begin{proposition}[Token-linearity of LM FLOPs]
\label{prop:constant}
Let $n_{\mathrm{ctx}}=n_v+n_q$ denote the multimodal prefix length (visual + text tokens) and let $T$ be the number of decoded tokens. For a given model $\mathcal{M}$, define $C_{\mathcal{M}}$ as the average FLOPs incurred by the LM stack per processed token (summed over all transformer layers, excluding the vision encoder). Then the total inference FLOPs for one completion satisfy
\begin{equation}
\label{eq:flops_pseudo_linear}
F(I,Q,T) \;\approx\; F_v(I) \;+\; C_{\mathcal{M}}\,(n_{\mathrm{ctx}} + T).
\end{equation}
\end{proposition}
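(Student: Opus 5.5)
The plan is to write out the FLOPs of one decoder-only transformer pass term by term and then show that, in the regime of interest, everything except the vision encoder is, to leading order, a constant times the number of processed tokens. First we separate the vision encoder cost $F_v(I)$. It depends only on $I$ and not on $n_q$, $T$, or any downstream pruning, because KDV acts after the encoder. We then split the LM cost into prefill over $n_{\mathrm{ctx}}$ tokens and autoregressive decoding of $T$ tokens with a KV cache. For each of the $L$ layers, with hidden size $d$, MLP width $d_{\mathrm{ff}}$, and $H$ heads, we count two kinds of work. The \emph{token-local} part consists of the QKV and output projections and the MLP, costing roughly $2(4d^2 + 2d\,d_{\mathrm{ff}})$ FLOPs per token (or the gated-MLP analogue), plus the LM head $2dV$. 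This part is exactly linear in the number of tokens, whether they are processed in parallel during prefill or one at a time during decoding with the cache. The \emph{attention-mixing} part ($QK^\top$ and attention-times-$V$) costs about $4d\,n_{\mathrm{ctx}}^2/2$ in prefill under causal masking. In decoding, the $t$-th generated token costs about $4d\,(n_{\mathrm{ctx}}+t)$, which sums to $4d\,(n_{\mathrm{ctx}}T + T^2/2)$.

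Summing over layers gives an exact expression of the form $F = F_v(I) + c_1 (n_{\mathrm{ctx}}+T) + c_2 (n_{\mathrm{ctx}}+T)^2$. Here $c_1 = L\cdot 2(4d^2+2d\,d_{\mathrm{ff}}) + O(dV)$ and $c_2 \approx 2Ld$; the prefill and decoding quadratic terms combine, up to lower-order terms, into $c_2 (n_{\mathrm{ctx}}+T)^2$. Defining $C_{\mathcal{M}}$ as the average per-token LM cost, $C_{\mathcal{M}} := \big(F - F_v\big)/(n_{\mathrm{ctx}}+T)$ equals $c_1 + c_2 (n_{\mathrm{ctx}}+T)$. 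Writing it this way makes the approximation statement precise: the relative deviation of $C_{\mathcal{M}}$ from the constant $c_1$ is $c_2(n_{\mathrm{ctx}}+T)/c_1 \approx (n_{\mathrm{ctx}}+T)/(4d+2d_{\mathrm{ff}})\cdot \tfrac{1}{1}$ up to constants. For typical sizes (Qwen2.5-VL-7B: $d=3584$, $d_{\mathrm{ff}}\approx 18944$), this ratio is small whenever the total sequence length is a few thousand tokens.

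The main obstacle is this last step: the quadratic attention term is not negligible for long video contexts, so linearity is only an approximation. We plan to handle it in two ways. First, we state an explicit bound, $|F - F_v - c_1(n_{\mathrm{ctx}}+T)| \le c_2 (n_{\mathrm{ctx}}+T)^2$, with relative error $\varepsilon = O\big((n_{\mathrm{ctx}}+T)/d_{\mathrm{ff}}\big)$. Second, we note that over a workload with bounded length range, $C_{\mathcal{M}}$ can be taken as the average of $c_1 + c_2(n_{\mathrm{ctx}}+T)$ over that range, which absorbs the mild length dependence. The empirical validation mentioned in the footnote (measured versus predicted FLOPs) then confirms that the residual is small.

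Finally, we extend the result to shared prefill with $K$ rollouts, which yields the form in Eq.~\eqref{eq:per_example_cost}. Prefill is run once on $\rho n_v + n_q$ tokens, and each rollout decodes about $T$ tokens against the shared cache. Applying the same token-local linear count gives $F_v(I) + C_{\mathcal{M}}(\rho n_v + n_q + KT)$, with the same quadratic-residual caveat.
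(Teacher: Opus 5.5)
Your proposal is correct and follows essentially the same route as the paper: separate out $F_v(I)$, count per-layer prefill and KV-cached decoding FLOPs as a token-linear projection/MLP part plus a quadratic attention part, and control the quadratic part through bounded sequence length. The differences are only in bookkeeping. You halve prefill attention for causal masking, so the quadratic terms collapse to $2Ld\,(n_{\mathrm{ctx}}+T)^2$, and you report a pointwise relative error $\approx (n_{\mathrm{ctx}}+T)/(4d+2d_{\mathrm{ff}})$; the paper instead counts full $4dn^2$ prefill attention and derives a uniform sandwich $LB_0(n_{\mathrm{ctx}}+T)\le F_{\mathrm{LM}}\le L(B_0+4d\kappa)(n_{\mathrm{ctx}}+T)$ over an assumed range $n_{\mathrm{ctx}}\le n_{\max}$, $T\le T_{\max}$.
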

\begin{proof}
We separate the vision encoder from the LM stack:
\begin{equation}
F(I,Q,T) \;=\; F_v(I) \;+\; F_{\mathrm{LM}}(n_{\mathrm{ctx}},T).
\label{eq:decomp_vision_lm}
\end{equation}
It suffices to show that $F_{\mathrm{LM}}(n_{\mathrm{ctx}},T)$ is well-approximated by a constant times $(n_{\mathrm{ctx}}+T)$.

We use the per-layer FLOPs model
\begin{equation}
C_{\mathrm{pre}}(n)=4 d\,n^2 + B_0 n,
\qquad
C_{\mathrm{step}}(n)=4 d\,n + B_0,
\label{eq:cost_model_linear_proof}
\end{equation}
where token-linear terms (projections + MLP) are grouped into $B_0$:
\begin{equation}
B_0 \;:=\; 4d^2 \;+\; 4d\,d_{\mathrm{kv}} \;+\; \gamma d m,
\label{eq:B0_linear_proof}
\end{equation}
In~\eqref{eq:cost_model_linear_proof}, $n$ is the sequence length (tokens), $C_{\mathrm{pre}}(n)$ is per-layer prefill FLOPs (KV-cache build), and $C_{\mathrm{step}}(n)$ is per-layer FLOPs for one KV-cached decode step at cache length $n$.
In~\eqref{eq:B0_linear_proof}, $d$ is the LLM hidden size, $d_{\mathrm{kv}}$ is the KV projection width, $m$ is the FFN/MLP intermediate size, $B_0$ groups token-linear terms (projections + FFN), and $\gamma$ is the FFN constant ($\gamma=6$ for SwiGLU).

Let the LM have $L$ layers. Under KV-cached decoding:
\begin{itemize}
    \item \textbf{Prefill:} build the KV cache on the prefix of length $n_{\mathrm{ctx}}$, costing $L\,C_{\mathrm{pre}}(n_{\mathrm{ctx}})$.
    \item \textbf{Decode:} at step $t\in\{1,\dots,T\}$, the cache length is $n_{\mathrm{ctx}}+(t-1)$, costing  $L\,C_{\mathrm{step}}(n_{\mathrm{ctx}}+t-1)$.
\end{itemize}
Therefore,
\begin{equation}
F_{\mathrm{LM}}(n_{\mathrm{ctx}},T)
=
L\,C_{\mathrm{pre}}(n_{\mathrm{ctx}})
+\sum_{t=1}^{T} L\,C_{\mathrm{step}}(n_{\mathrm{ctx}}+t-1).
\label{eq:lm_exact_start}
\end{equation}

Substitute~\eqref{eq:cost_model_linear_proof} into~\eqref{eq:lm_exact_start}:
\begin{align}
F_{\mathrm{LM}}(n_{\mathrm{ctx}},T)
&=
L\big(4 d\,n_{\mathrm{ctx}}^2 + B_0 n_{\mathrm{ctx}}\big)
\nonumber\\
&\quad+\sum_{t=1}^{T}L\big(4 d\,(n_{\mathrm{ctx}}+t-1)+B_0\big)
\nonumber\\
&=
L B_0(n_{\mathrm{ctx}}+T)
\nonumber\\
&\quad+4Ld\!\left(
n_{\mathrm{ctx}}^2
+\sum_{t=1}^{T}(n_{\mathrm{ctx}}+t-1)
\right).
\label{eq:lm_expand_mid}
\end{align}

Compute the sum:
\[
\sum_{t=1}^{T}(n_{\mathrm{ctx}}+t-1)
=
\sum_{t=1}^{T}n_{\mathrm{ctx}}+\sum_{t=1}^{T}(t-1)
=
n_{\mathrm{ctx}}T+\frac{T(T-1)}{2}.
\]
Plugging in gives the exact expression
\begin{equation}
\begin{aligned}
F_{\mathrm{LM}}(n_{\mathrm{ctx}},T)
&=
L B_0(n_{\mathrm{ctx}}+T)
\\
&\quad+
4Ld\!\left(
n_{\mathrm{ctx}}^2
+n_{\mathrm{ctx}}T
+\frac{T(T-1)}{2}
\right).
\end{aligned}
\label{eq:lm_exact_expand_final}
\end{equation}

\par\noindent\textbf{Binding nonlinear term by linear function of $(n_{\mathrm{ctx}}+T)$.}
Assume an operating regime with
\[
0 \le n_{\mathrm{ctx}}\le n_{\max},
\qquad
0 \le T\le T_{\max}.
\]
Define
\begin{equation}
\kappa \;:=\; n_{\max}+\frac{T_{\max}-1}{2}.
\label{eq:kappa_def}
\end{equation}
We bound the three nonlinear terms one-by-one:
\[
n_{\mathrm{ctx}}^2 \le n_{\max}n_{\mathrm{ctx}},
n_{\mathrm{ctx}}T \le n_{\max}T,
\frac{T(T-1)}{2} \le \frac{T_{\max}-1}{2}\,T.
\]
Adding these inequalities gives
{\small
\begin{align}
n_{\mathrm{ctx}}^2+n_{\mathrm{ctx}}T+\frac{T(T-1)}{2}
&\le
n_{\max}n_{\mathrm{ctx}}
+n_{\max}T
+\frac{T_{\max}-1}{2}\,T
\nonumber\\
&=
n_{\max}n_{\mathrm{ctx}}
+\Big(n_{\max}+\frac{T_{\max}-1}{2}\Big)T
\nonumber\\
&=
n_{\max}n_{\mathrm{ctx}}+\kappa T.
\label{eq:nonlinear_bound_intermediate}
\end{align}
}

Since $\kappa \ge n_{\max}$ and $n_{\mathrm{ctx}}\ge 0$, we have $n_{\max}n_{\mathrm{ctx}}\le \kappa n_{\mathrm{ctx}}$, hence
\begin{equation}
n_{\mathrm{ctx}}^2+n_{\mathrm{ctx}}T+\frac{T(T-1)}{2}
\;\le\;
\kappa n_{\mathrm{ctx}}+\kappa T
=
\kappa\,(n_{\mathrm{ctx}}+T).
\label{eq:nonlinear_linear_bound}
\end{equation}

\par\noindent\textbf{Linear sandwich and token-linearity.}
Start from~\eqref{eq:lm_exact_expand_final}.
The bracketed term is nonnegative, so
\begin{equation}
F_{\mathrm{LM}}(n_{\mathrm{ctx}},T)
\;\ge\;
L B_0(n_{\mathrm{ctx}}+T).
\label{eq:lm_lower_bound}
\end{equation}
Using~\eqref{eq:nonlinear_linear_bound} in~\eqref{eq:lm_exact_expand_final} gives
\begin{align}
F_{\mathrm{LM}}(n_{\mathrm{ctx}},T)
&\le
L B_0(n_{\mathrm{ctx}}+T)
+
4Ld\,\kappa\,(n_{\mathrm{ctx}}+T)
\nonumber\\
&=
L(B_0+4d\kappa)\,(n_{\mathrm{ctx}}+T).
\label{eq:lm_upper_bound}
\end{align}
Thus
\begin{equation}
L B_0(n_{\mathrm{ctx}}+T)
\;\le\;
F_{\mathrm{LM}}(n_{\mathrm{ctx}},T)
\;\le\;
L(B_0+4d\kappa)\,(n_{\mathrm{ctx}}+T).
\label{eq:linear_sandwich_final}
\end{equation}
We therefore use the token-linear surrogate
$F_{\mathrm{LM}}(n_{\mathrm{ctx}},T)\approx C_{\mathcal{M}}(n_{\mathrm{ctx}}+T)$,
where $C_{\mathcal{M}}$ is measured once for $\mathcal{M}$, and substituting into~\eqref{eq:decomp_vision_lm} yields~\eqref{eq:flops_pseudo_linear}.
\end{proof}

% \par\noindent\textbf{Numerical example: computing $C_{\mathcal{M}}$ (Qwen2.5-VL-7B).}
% We define the per-token LM-stack constant at an operating point $(n_{\mathrm{ctx}},T_0)$ as
% \[
% C_{\mathcal{M}}
% \;:=\;
% \frac{F_{\mathrm{LM}}(n_{\mathrm{ctx}},T_0)}{n_{\mathrm{ctx}}+T_0}.
% \]
% For Qwen2.5-VL-7B (Fig.~\textbf{X}), we use $L=28$, $d=3584$, $H_{\mathrm{kv}}=4$, $d_h=128$
% (so $d_{\mathrm{kv}}=H_{\mathrm{kv}}d_h=512$), $m=18944$, and SwiGLU ($\gamma=6$), hence
% \[
% B_0
% =4d^2+4d\,d_{\mathrm{kv}}+\gamma d m
% =466{,}092{,}032.
% \]
% Dividing the exact expression in~\eqref{eq:lm_exact_expand_final} by $(n_{\mathrm{ctx}}+T_0)$ gives
% \[
% C_{\mathcal{M}}
% =
% L B_0
% +
% \frac{4Ld}{n_{\mathrm{ctx}}+T_0}
% \left(
% n_{\mathrm{ctx}}^2+n_{\mathrm{ctx}}T_0+\frac{T_0(T_0-1)}{2}
% \right).
% \]
% At a representative point example of one data with $n_{\mathrm{ctx}}=1500$ and $T_0=200$ (so $n_{\mathrm{ctx}}+T_0=1700$),
% \[
% C_{\mathcal{M}}\;\approx\;
% 13.66\ \text{GFLOPs/token}.
% \]

% \caption{\textbf{Dataset-level token scaling for self-consistency.}
% Let $T_{i,k}$ be the number of decoded tokens for example $i$ and rollout $k$ under the same decoding policy.
% We empirically verify that the total number of generated tokens scales approximately linearly with $K$,
% i.e., $\sum_{i}\sum_{k=1}^K T_{i,k} \approx K \sum_{i} T_{i,1}$.
% By Proposition~\ref{prop:prune_generate_nq_nv} (token-linearity), this implies decoding FLOPs scale approximately linearly with $K$.}
% \label{tab:token_scaling_dataset}
% \end{table}

\subsection{FLOPs Scaling With $K$ Trajectories Under Shared-prefill}
\begin{theorem}[FLOPs scaling with $K$ trajectories under shared-prefill]
\label{thm:flops_scaling_vlm}
Under shared-prefill execution, all $K$ trajectories reuse the same vision features and prefill KV cache. If the $k$-th trajectory decodes $T_k$ tokens, then
\begin{equation}
\label{eq:k_sample_flops_vlm}
\small{
F_K(I,Q;T_{1:K})
=F_v(I)+F_{\mathrm{pre}}(n_v,n_q)+\sum_{k=1}^{K} F_{\mathrm{dec}}(n_v,n_q,T_k),
}
\end{equation}
where $F_K$ is the inference FLOPs for generating $Y_K$, or K trajectories. If $T_k \overset{\mathrm{i.i.d.}}{\sim} T$, then
$\mathbb{E}[F_K(I,Q)]
=
F_v(I)+F_{\mathrm{pre}}(n_v,n_q)+K\,\mathbb{E}[F_{\mathrm{dec}}(n_v,n_q,T)]$.
\end{theorem}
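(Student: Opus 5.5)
The plan is to follow the same accounting used in the proof of Proposition~\ref{prop:constant}, but now tracking the shared prefill separately from the $K$ decoding branches. First, I would decompose the computation of $Y_K$ into its execution stages. The vision encoder runs once on $I$, costing $F_v(I)$. The LM prefill runs once over the prefix of length $n_{\mathrm{ctx}}=n_v+n_q$, costing $F_{\mathrm{pre}}(n_v,n_q)=L\,C_{\mathrm{pre}}(n_v+n_q)$ in the cost model~\eqref{eq:cost_model_linear_proof}. Under shared prefill, neither of these is recomputed per trajectory: every branch reads the same vision features and the same KV cache. This is the defining property of the execution model, so this step is essentially bookkeeping.

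Second, I would account for each trajectory's decoding. For trajectory $k$, step $t\in\{1,\dots,T_k\}$ attends to the shared prefix cache plus its own $t-1$ previously generated tokens. Branches never attend to each other's tokens, so the cache length seen at that step is $n_{\mathrm{ctx}}+t-1$. The cost of trajectory $k$ is therefore
\[
F_{\mathrm{dec}}(n_v,n_q,T_k)=\sum_{t=1}^{T_k}L\,C_{\mathrm{step}}(n_v+n_q+t-1),
\]
which depends only on $(n_v,n_q,T_k)$. Summing the vision, prefill and $K$ decoding costs gives~\eqref{eq:k_sample_flops_vlm} exactly. The one point to justify carefully is this independence: the cost of branch $k$ depends on no other branch. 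It follows because the KV cache is read-only for the prefix and each branch appends only to its own suffix.

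Third, for the expectation, I would take $\mathbb{E}$ of~\eqref{eq:k_sample_flops_vlm}. The first two terms are deterministic given $(I,Q)$. Linearity of expectation then gives $\sum_{k=1}^K\mathbb{E}[F_{\mathrm{dec}}(n_v,n_q,T_k)]$. Since each $T_k$ has the same law as $T$, each summand equals $\mathbb{E}[F_{\mathrm{dec}}(n_v,n_q,T)]$, which yields the factor $K$. Only identical distribution is needed here, not independence.

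The main obstacle is integrability. $F_{\mathrm{dec}}$ is quadratic in $T$, explicitly $L\big(B_0T+4d(n_{\mathrm{ctx}}T+T(T-1)/2)\big)$, so I need $\mathbb{E}[T^2]<\infty$. This holds automatically in the bounded operating regime $T\le T_{\max}$ assumed in Proposition~\ref{prop:constant}, and I would state it as that assumption. Finally, as a remark, combining the result with the linear sandwich~\eqref{eq:linear_sandwich_final} recovers the surrogate $F_v+C_{\mathcal{M}}(n_{\mathrm{ctx}}+K\,\mathbb{E}T)$ used in~\eqref{eq:per_example_cost}.
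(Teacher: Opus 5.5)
Your proposal is correct and follows essentially the same route as the paper: the vision encoder and prefill are charged once, each branch's decoding cost depends only on the shared prefix cache and its own generated tokens so the per-branch costs simply add, and linearity of expectation supplies the factor $K$. Your extras are sound refinements of that bookkeeping argument: instantiating $F_{\mathrm{pre}}$ and $F_{\mathrm{dec}}$ through the per-layer cost model, noting that only identical distribution of the $T_k$ is needed, and flagging integrability (since every term is nonnegative, that condition matters only for finiteness, not for the identity itself).
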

\label{sec:appendix_proofs}
\begin{proof}[Proof]
\label{proof:flops_scaling_vlm}
The vision encoder depends only on $I$ and is computed once, incurring $F_v(I)$ FLOPs; its output is reused across samples.
Given the vision features and prompt text $s$, the prefill pass that constructs the KV cache depends only on $(I,s)$ and is likewise computed once,
incurring $F_{\mathrm{pre}}(I,s)$ FLOPs.
For each sample $k$, autoregressive generation produces $T_k$ tokens; by definition, decoding $T_k$ tokens given the cached states costs
$F_{\mathrm{dec}}(I,s,T_k)$ FLOPs.
Decoding computations across different samples cannot be shared beyond the cached states because they depend on distinct sampled token sequences,
hence the total FLOPs equal the shared vision and prefill costs plus the sum of per-sample decoding costs, yielding \autoref{eq:k_sample_flops_vlm}.
Taking expectations over i.i.d.\ lengths $T_k$ gives \autoref{thm:flops_scaling_vlm}.
\end{proof}

% requires: \usepackage{booktabs} \usepackage{multirow}

% \begin{table*}[t]
% \centering
% \setlength{\tabcolsep}{5pt}
% \resizebox{\textwidth}{!}{%
% \begin{tabular}{c|ccc|ccc|c|c|c}
% \toprule
% $K$
% & $n_q$
% & $n_v$
% & $T_k$
% & $F_v(I)$ (T)
% & $F_{\mathrm{pre}}(n_v,n_q)$ (T)
% & $\sum_{k=1}^K F_{\mathrm{dec}}(n_v,n_q,T_k)$ (T)
% & FLOPs / gen.\ token ($C_{\mathcal{M}}$) (G)
% & $F_K(I,Q;T_{1:K})$ (T)
% & $\widehat{\mathbb{E}}[F_K(I,Q)]$ (T) \\
% \midrule
% \multicolumn{10}{l}{\small $\widehat{\mathbb{E}}[T]=85.1$ estimated from $K=10$ rollouts; \;
% $\widehat{\mathbb{E}}[F_K]=F_v+F_{\mathrm{pre}}+\frac{K\,C_{\mathcal{M}}}{1000}\widehat{\mathbb{E}}[T]$.} \\
% \midrule
% 1
% & 84 & 1302 & 76
% & 6.680 & 19.601 & 1.075
% & \multirow{3}{*}{14.142}
% & 27.356
% & 27.484 \\
% 3
% & 84 & 1302 & 76/104/94
% & 6.680 & 19.601 & 3.875
% &
% & 30.156
% & 29.891 \\
% 5
% & 84 & 1302 & 76/104/94/91/84
% & 6.680 & 19.601 & 6.350
% &
% & 32.631
% & 32.298 \\
% \bottomrule
% \end{tabular}%
% }
% \caption{\textbf{FLOPs additivity and expected scaling under shared-prefill for one sample.}
% FLOPs are reported in TFLOPs (T), except $C_{\mathcal{M}}$ in GFLOPs/token (G).
% The last column reports the expectation-style estimate
% $\widehat{\mathbb{E}}[F_K(I,Q)]$, where $\widehat{\mathbb{E}}[T]$ is estimated from $K{=}10$ rollouts (average generated length $85.1$ tokens).}
% \label{tab:flops_sanity}
% \end{table*}

\subsubsection{Empirical example of the FLOPs model}
% \autoref{tab:flops_sanity} provides an empirical sanity check for our FLOPs decomposition (Theorem~\ref{thm:flops_scaling_vlm}) and the token-linearity approximation (Proposition~\ref{prop:prune_generate_nq_nv}).
% Under shared-prefill execution, the vision and prefill costs ($F_v$ and $F_{\mathrm{pre}}$) are incurred once and remain constant as $K$ varies, while the total decoding cost grows with the number of generated tokens across trajectories.
% Consistent with Proposition~\ref{prop:prune_generate_nq_nv}, we observe a stable per-token decoding cost, reported as $C_{\mathcal{M}}$ (GFLOPs/token), which allows us to form an expectation-style estimate of total cost:
% $\widehat{\mathbb{E}}[F_K(I,Q)] = F_v(I)+F_{\mathrm{pre}}(n_v,n_q)+\frac{K\,C_{\mathcal{M}}}{1000}\widehat{\mathbb{E}}[T]$.
% The close agreement between this estimate and the measured totals across $K$ supports the additivity in~\eqref{eq:k_sample_flops_vlm} and justifies using $C_{\mathcal{M}}$ as a compact summary of decoding cost when analyzing the VCS--VRS trade-off.

Theorem~\ref{thm:flops_scaling_vlm} shows that under shared-prefill, increasing $K$ affects only the decoding term, which sums over trajectories.
If rollouts are sampled i.i.d.\ under a fixed decoding policy, the total number of generated tokens over a curated dataset scales approximately linearly with $K$:
$\sum_{i\in\mathcal{D}}\sum_{k=1}^{K} T_{i,k}\approx K\sum_{i\in\mathcal{D}}T_{i,1}$ (~\autoref{tab:scaling_linear}).
Combined with Proposition~\ref{prop:constant}, which models decoding FLOPs as approximately linear in decoded tokens, this yields near-linear scaling of decoding compute with $K$, while $F_v$ and $F_{\mathrm{pre}}$ remain one-time costs.

\begin{table}[t]
\centering
\caption{Token counts and scaling behavior as a function of $K$. The table shows that increasing $K$ induces an approximately linear growth in total decoded tokens, with an $8.7\times$ increase at $K=9$ relative to $K=1$, confirming that inference scaling is dominated by token accumulation across self-consistency passes.
}
\label{tab:scaling_linear}
\small
\setlength{\tabcolsep}{24pt}
\resizebox{1.0\linewidth}{!}{%
\begin{tabular}{c|c|c|c}
\toprule
$K$
& $\sum_{i\in\mathcal{D}} T_{i,1}$
& $\sum_{i\in\mathcal{D}}\sum_{k=1}^{K} T_{i,k}$
& Token ratio $\left(\frac{\sum_{i}\sum_{k\le K}T_{i,k}}{\sum_{i}T_{i,1}}\right)$ \\
\midrule
1  & \multirow{5}{*}{155,038} & 155,038   & $1.00$ \\
3  &                          & 459,114   & 2.96   \\
5  &                          & 753,010   & 4.86   \\
7  &                          & 1,055,468 & 6.81   \\
9  &                          & 1,352,350 & 8.72   \\
\bottomrule
\end{tabular}%
}
\end{table}

\section{Data Curation for Training the Difficulty Predictor}
\label{supp:sec:data_curation}

We curate a calibration set for training the difficulty predictor by downselecting from an initial pool of
multimodal samples (62{,}430 images and 10{,}709 videos) collected from open-source benchmarks, including
LLaVA-Video-178K~\cite{zhang2024video}, AOKVQA~\cite{schwenk2022okvqa}, AI2D~\cite{kembhavi2016diagram},
TQA~\cite{kembhavi2017you}, SEEDBench~\cite{li2023seed}, ScienceQA~\cite{lu2022learn}, ChartQA~\cite{masry2022chartqa},
and MMMU~\cite{yue2024mmmu}. Our filtering follows three principles, inspired by S1~\cite{muennighoff2025s1}:
\textbf{Quality}, \textbf{Difficulty}, and \textbf{Diversity}.

\subsection{Quality Filtering}
We use Qwen2.5-VL-32B as an automatic judge to remove low-quality or ill-posed items, including:
(i) vague or underspecified questions, (ii) poor formatting or ambiguous answer options, (iii) cases where the
image/video is not required to solve the question, and (iv) corrupt or low-quality visual inputs. We also
retain only multiple-choice questions with explicit answer options. After this stage, we retain 45{,}000 samples.

\subsection{Difficulty Estimation and Validity Checks}
We estimate difficulty using Qwen2.5-VL-7B with CoT prompting. For each candidate, we run $5$ stochastic passes (temperature sampling) and record the empirical success rate. Since all questions are multiple-choice, we apply
a lightweight answer extractor and use a Qwen-7B judge to verify that each output (i) conforms to the expected
choice format and (ii) matches the ground-truth answer. We discard samples for which any of the $5$ passes yields an invalidly formatted output. After this stage, 31{,}423 samples remain.

\subsection{Diversity}
To encourage broad coverage across visual domains and question types, we use Qwen2.5-VL-32B to assign each candidate
to one of $36$ coarse subject categories (e.g., documents/graphics, indoor scenes, outdoor scenes, people, sports,
tools, STEM, diagrams, charts). We then construct a subject-stratified subset by sampling approximately uniformly
across categories while preserving a balanced spread over the difficulty spectrum. This yields 8{,}000 samples.

\subsection{Balancing and De-duplication}
To avoid over-representing particular difficulty bands, we further subsample approximately uniformly over the
success-rate spectrum. Finally, because several benchmarks share overlapping visuals, we remove near-duplicates
between the training pool and evaluation benchmarks. Specifically, we embed all images using CLIP and discard any
training example whose visual embedding has cosine similarity greater than $0.95$ with an evaluation image. For videos,
we apply the same criterion using uniformly sampled frames and remove a video if any sampled frame exceeds this
threshold. This conservative threshold is intended to remove exact or near-exact visual overlap while avoiding the
removal of merely semantically similar examples. The final dataset contains 4{,}000 images and 1{,}000 videos.

\section{Ablation Studies}
\subsection{Adaptive Rollout Allocation}
\label{app:adaptive_rollout_allocation}
\autoref{fig:judged_k_barplots} shows the empirical distribution of rollout counts selected by the learned VRS controller, where $K \in \{1,3,5,7\}$. The allocation is highly dataset-dependent, indicating that the controller learns non-trivial compute profiles rather than collapsing to a fixed decoding budget. The average selected rollout count is $2.12$ on MMBench, $2.53$ on MME, and $3.79$ on Video-TT, reflecting increasing use of reasoning-time search on benchmarks where additional rollouts are more likely to be beneficial.

In addition, \autoref{fig:judged_k_barplots} shows that AVIS preserves low-cost inference for many queries while selectively increasing compute for a targeted subset. On MMBench, the controller is relatively conservative, but still assigns larger budgets to non-negligible fractions of the data: $14.2\%$ of samples receive $K=5$ and $7.6\%$ receive $K=7$. On MME, more than one-third of samples are assigned $K>1$, suggesting that the predictor identifies a substantial set of examples where self-consistency may improve reliability. The effect is strongest on Video-TT, where $K=5$ is the most frequent assignment, covering $51.4\%$ of samples, consistent with the greater reasoning and temporal grounding demands of video understanding.

These results confirm that the learned rollout selector performs input-adaptive reasoning allocation. It does not simply increase test-time compute uniformly, nor does it degenerate to greedy decoding. Instead, it amortizes the inference budget across the workload: easy or low-gain examples are handled with a small $K$, while additional rollouts are concentrated on examples predicted to benefit from reasoning-time search. This behavior explains how AVIS can capture part of the accuracy gain of self-consistency while maintaining a substantially lower average compute cost.

\subsection{Ablating the Learned Difficulty Predictor}
\label{app:difficulty_predictor_ablation}
\textbf{Effect of calibration set size.}
We first study how the size of the calibration pool affects predictor quality.
Using a fixed held-out split of 250 samples, predictor accuracy improves
monotonically as the amount of training data increases. Training with $20\%$,
$50\%$, $75\%$, and $100\%$ of the remaining calibration pool yields predictor
accuracies of $64.4\%$, $73.8\%$, $77.4\%$, and $79.2\%$, respectively.
This trend indicates that the solvability signal is learnable from a moderate
number of calibration examples, and that the full 5000 calibration pool provides
a stable operating point for adaptive rollout selection.

\textbf{Architecture and end-to-end impact.}
We compare our AVIS predictor against alternative lightweight designs. On the
held-out calibration split, a two-layer Transformer and a Perceiver-style
predictor achieve $72.0\%$ and $69.6\%$ predictor accuracy, respectively, while
our final predictor achieves the highest held-out accuracy, $79.2\%$. We then
evaluate these variants end-to-end on MathVista and MMBench, as shown in
\autoref{tab:predictor_arch_ablation}.

\begin{table}[t]
\centering
\caption{
\textbf{Ablation of difficulty predictor architectures.} We keep the pruning policy, decoding settings, and budget calibration fixed, and change only the predictor architecture. Avg. $K$ denotes the average number of selected reasoning rollouts, and \#F denotes normalized FLOPs relative to the vanilla $K=1$ baseline.
}
\label{tab:predictor_arch_ablation}
\setlength{\tabcolsep}{16pt}
\resizebox{1.0\linewidth}{!}{%
\begin{tabular}{lccccccc}
\toprule
\multirow{2}{*}{Predictor}
& \multirow{2}{*}{Pred. Acc.}
& \multicolumn{3}{c}{MathVista}
& \multicolumn{3}{c}{MMBench} \\
\cmidrule(lr){3-5}
\cmidrule(lr){6-8}
& & Score & Avg. $K$ & \#F
& Score & Avg. $K$ & \#F \\
\midrule
Perceiver        & 69.6 & 68.1 & 3.57 & 0.64 & 86.6 & 3.25 & 0.59 \\
Transformer      & 72.0 & 68.6 & 2.93 & 0.53 & 86.3 & 2.57 & 0.51 \\
\cellcolor{winblue}AVIS (Ours)             & \cellcolor{winblue}79.2 & \cellcolor{winblue}68.1 & \cellcolor{winblue}2.33 & \cellcolor{winblue}0.46 & \cellcolor{winblue}86.4 & \cellcolor{winblue}2.12 & \cellcolor{winblue}0.44 \\
\bottomrule
\end{tabular}
}
\end{table}

The end-to-end results show that predictor quality primarily improves compute
allocation rather than simply increasing downstream accuracy. The Perceiver and
Transformer variants select larger rollout counts, leading to higher FLOPs with
little or no accuracy gain. On MathVista, our predictor matches the Perceiver
score of $68.1$ while reducing average $K$ from $3.57$ to $2.33$ and FLOPs from
$0.64\times$ to $0.46\times$. The Transformer obtains a slightly higher
MathVista score of $68.6$, but requires $2.93$ average rollouts and
$0.53\times$ FLOPs. On MMBench, our predictor achieves comparable accuracy to
the Perceiver, $86.4$ versus $86.6$, while reducing average $K$ from $3.25$ to
$2.12$ and FLOPs from $0.59\times$ to $0.44\times$. Compared with the
Transformer, our predictor also uses fewer rollouts and lower FLOPs on both
benchmarks, with comparable downstream accuracy.

These results suggest that higher predictor accuracy translates into a better
accuracy--FLOPs trade-off. Less accurate predictors produce less reliable
solvability estimates, which causes the binning rule to assign larger $K$ more
often. This increases compute without consistent downstream gains. In contrast,
our AVIS predictor produces sharper and more useful solvability estimates,
avoiding unnecessary rollouts while preserving accuracy.

\textbf{Predictor overhead.}
The learned difficulty predictor itself adds negligible inference overhead. It
consists of a stack of 1D Conv--GroupNorm--SiLU layers followed by global
average pooling and a two-layer MLP. Since it operates on the visual token
embeddings already computed by the vision encoder, it is executed only once per
input and does not require any additional VLM forward pass. Its average cost is
approximately $6$ GFLOPs per example, corresponding to less than $0.1\%$ of
total inference FLOPs. This overhead is far smaller than the compute saved by
visual token pruning or by avoiding unnecessary reasoning rollouts.

\subsection{Ablations on KDV}
\label{supp:sec:kdv_pruning}

In this section, we evaluate \textbf{KDV} as a standalone visual token pruning method.
We use Qwen2.5-VL-7B-Instruct and report single-pass inference results ($K{=}1$), without enforcing the explicit \texttt{<think>} format used in our main experiments.\footnote{The model may still generate chain-of-thought text even without enforcing the format.}
All experiments follow the VLMEvalKit default image resolution ($1280$) and use a fixed pruning configuration that retains 320 out of 1280 visual tokens (approximately $75\%$ pruning).

\begin{figure*}[t]
    \centering
    \includegraphics[width=1.0\textwidth]{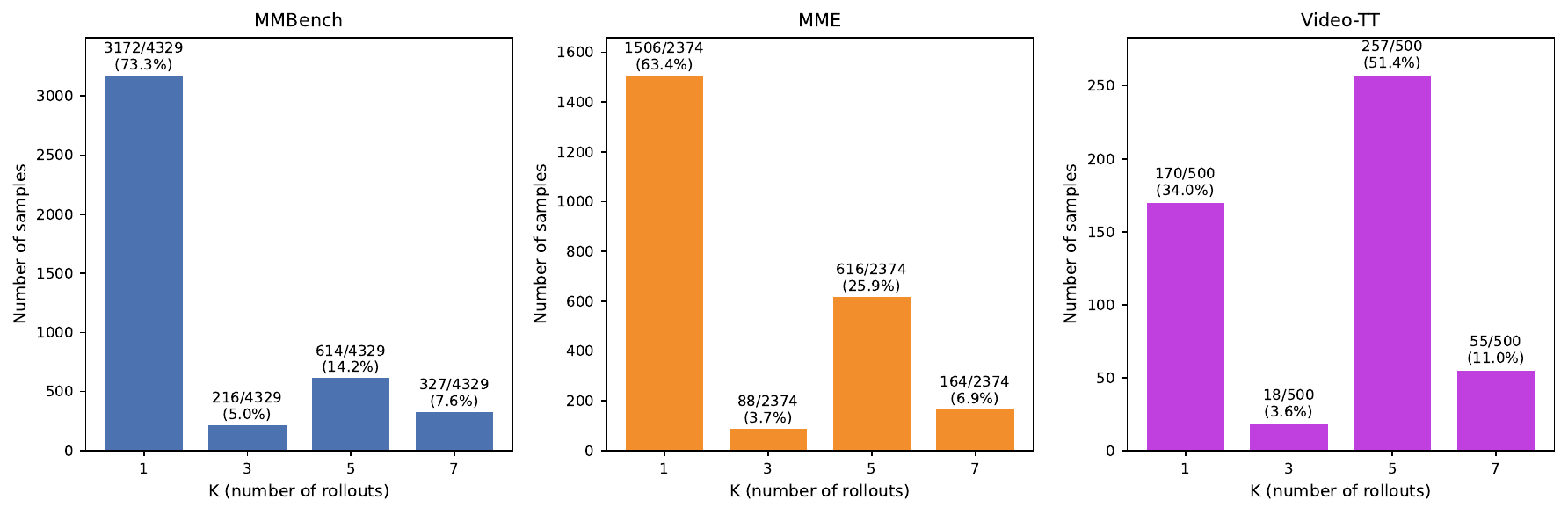}
\caption{\textbf{Distribution of predicted rollout counts $K$.}
Histogram of the learned difficulty-aware policy's $K$ assignments on representative image and video benchmarks. Because the binning rule reflects the inverted-U marginal utility of majority voting, $K{=}1$ is assigned at both the easy and very-hard extremes, while $K{=}5$ and $K{=}7$ are reserved for intermediate hard-but-solvable examples. The resulting distribution is dataset-dependent, shifting from $K{=}1$-dominated on MMBench to $K{=}5$-dominated on Video-TT.}
    % \vspace{-1em}
    \label{fig:judged_k_barplots}
\end{figure*}

\autoref{tab:kdv_pruning_main} summarizes performance on GQA, MMBench-DEV-EN, MME (perception+reasoning sum), POPE, ScienceQA-Image, CV-Bench-2D, SeedBench-Image, and MMStar.
Compared to the full-token \emph{Vanilla} baseline, all training-free pruning methods incur only modest degradation when retaining 25\% of visual tokens.
FastV achieves on average $95.48\%$ of the baseline score, VisionZip $97.86\%$, PDrop $97.45\%$, and KeyDiff $97.55\%$.
KDV attains the best average retention at \textbf{97.99\%}, with small gains on token-heavy benchmarks such as MME (101.13\% of Vanilla) and competitive performance elsewhere.
These results indicate that the key-diversity-based criterion used by KDV is a strong drop-in alternative to existing training-free pruning strategies when operating under tight visual token budgets.
\begin{table*}[t]
\centering
\small
\caption{\textbf{KDV pruning on Qwen-2.5-VL-Instruct.}
We report absolute scores and percentages relative to the full-token (\emph{Vanilla}) baseline. For this ablation, we do not enforce the \texttt{<think>} prompting format. Results are shown on GQA~\cite{hudson2019gqa}, MMBench-DEV-EN~\cite{liu2024mmbench}, MME~\cite{fu2025mme} (perception+reasoning sum), POPE~\cite{li2023evaluating} (overall accuracy), ScienceQA-Image~\cite{lu2022learn}, CV-Bench-2D~\cite{tong2024cambrian}, SeedBench-Image~\cite{li2023seed}, and MMStar~\cite{chen2024we}. We compare against Vanilla and representative training-free pruning baselines: FastV~\cite{chen2024image} ($K{=}2$), VisionZip~\cite{yang2025visionzip} (dominant tokens $=22\%$, context $=3\%$), PDrop~\cite{xing2024pyramiddrop} ($L{=}[8,16,24]$, $R{=}0.5$), and KeyDiff~\cite{park2025keydiff}.}

\setlength{\tabcolsep}{5pt}
\resizebox{\textwidth}{!}{%
\begin{tabular}{l|cccccccc|c}
\toprule
\textbf{Method}
& \textbf{GQA}
& \textbf{MMB}
& \textbf{MME}
& \textbf{POPE}
& \textbf{SQA}
& \textbf{CVBench}
& \textbf{SEED}
& \textbf{MMStar}
& \textbf{Avg.} \\
\midrule

\rowcolor{lightgray}
\multicolumn{10}{c}{\textit{Upper Bound, 1280 Tokens (100\%)}} \\
\midrule

\multirow{2}{*}{Vanilla}
& 60.26 & 78.69 & 2299 & 87.87 & 72.83  & 74.83 & 76.91 & 60.33
& \multirow{2}{*}{\centering 100\%} \\
& 100\% & 100\% & 100\% & 100\% & 100\% & 100\% & 100\% & 100\% & \\
\midrule

\rowcolor{lightgray}
\multicolumn{10}{c}{\textit{Retain 320 Tokens ($\downarrow$ 75\%)}} \\
\midrule

\multirow{2}{*}{FastV}
& 56.74 & 77.66 & 2300 & 85.71 & 71.54 & 70.04 & 70.04 & 54.6
& \multirow{2}{*}{\centering 95.48\%} \\
& 94.16\% & 98.69\% & 100.04\% & 97.54\% & 98.23\% & 93.60\% & 91.07\% & 90.50\% & \\
\midrule

\multirow{2}{*}{VisionZip}
& 59.07 & 77.92 & 2300 & 87.06 & 73.07 & 73.3 & 73.76 & 55.8
& \multirow{2}{*}{\centering 97.86\%} \\
& 98.03\% & 99.02\% & 100.04\% & 99.08\% & 100.33\% & 97.96\% & 95.90\% & 92.49\% & \\
\midrule

\multirow{2}{*}{PDrop}
& 56.91 & 77.41 & 2284 & 87.57 & 72.33 & 74.02 & 74.48 & 55.93
& \multirow{2}{*}{\centering 97.45\%} \\
& 94.44\% & 98.37\% & 99.35\% & 99.66\% & 99.31\% & 98.92\% & 96.84\% & 92.71\% & \\
\midrule

\multirow{2}{*}{KeyDiff}
& 58.03 & 77.21 & 2309 & 86.87 & 73.08 & 73.3 & 73.78 & 55.8
& \multirow{2}{*}{\centering 97.55\%} \\
& 96.30\% & 98.12\% & 100.43\% & 98.86\% & 100.34\% & 97.96\% & 95.93\% & 92.49\% & \\
\midrule

\rowcolor{winblue}
\textbf{KDV (Ours) }
& 58.76 & 78.53 & 2325 & 86.87 & 73.38 & 73.14 & 73.44 & 55.87 &  \textbf{97.99\%} \\
\rowcolor{winblue}
& 97.51\% & 99.80\% & 101.13\% & 98.86\% & 100.76\% & 97.74\% & 95.49\% & 92.61\% &\\
\bottomrule
\end{tabular}
}

\label{tab:kdv_pruning_main}
\end{table*}

%%%%%

\begin{figure}[t]
    \centering
    \includegraphics[width=0.75\textwidth]{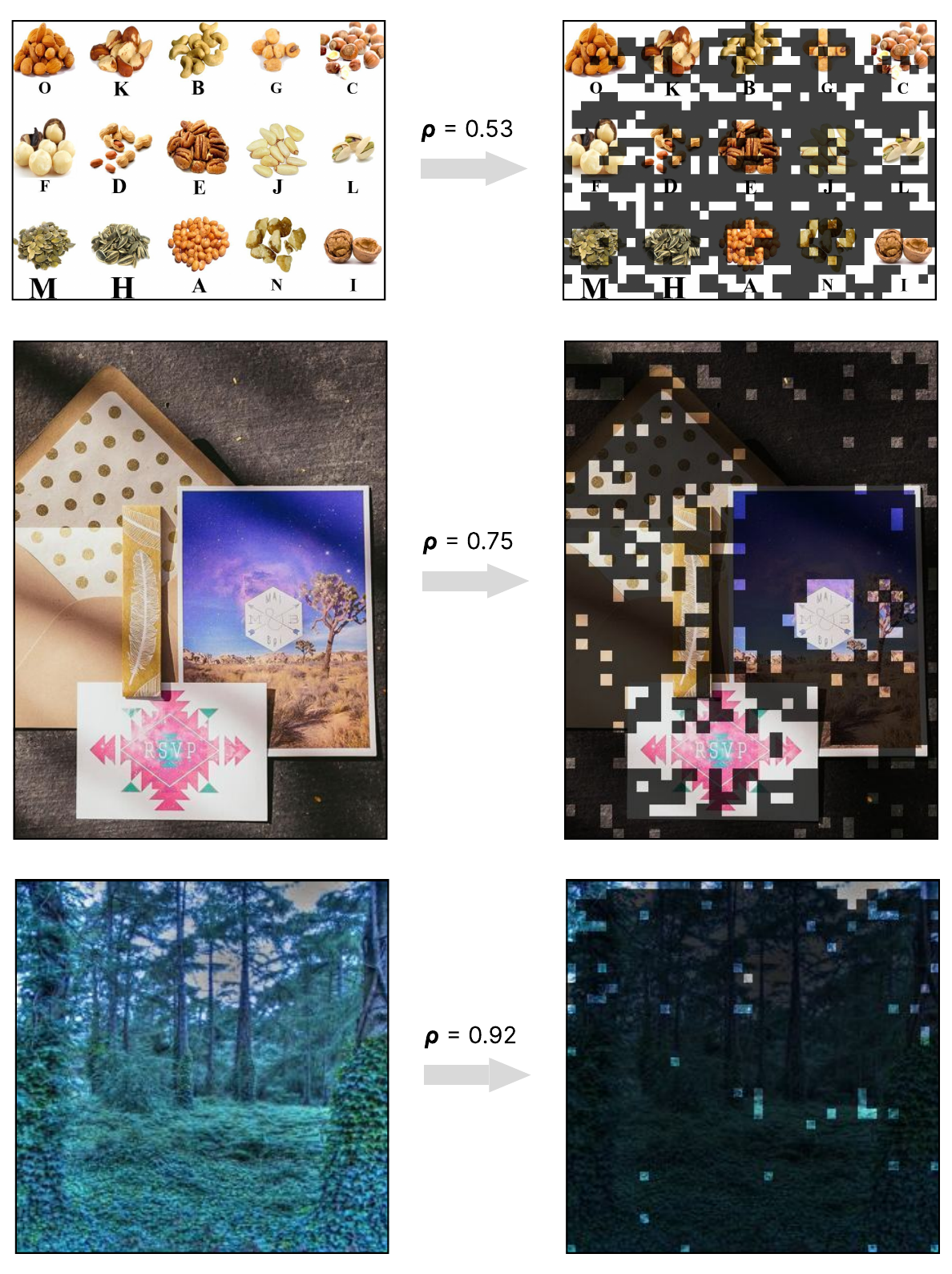}
    \caption{\textbf{Adaptive KDV pruning.}
    Qualitative examples of our adaptive KDV policy selecting different pruning ratios $\rho$ per image.
    In each pair, the left panel shows the original image and the right panel shows the retained visual tokens after pruning.\textit{ Images are from "Are We on the Right Way for Evaluating Large Vision-Language Models?" by Chen et al., published in NeurIPS 2024.}}
    \label{fig:adaptive_prune}
\end{figure}

\section{Limitations and Future Work}
\label{appendix:limitations}

AVIS is intentionally a lightweight, deployment-friendly instantiation of adaptive VCS--VRS allocation, and several of its design choices open natural directions for future work.

\noindent\textbf{Joint VCS--VRS policies.}
AVIS factorizes the inference policy into a vision-side pruning step (KDV) and a difficulty-aware rollout selector. This factorization is what makes both decisions runnable before LM prefill and is central to the compute savings we report. A natural extension is a single policy that selects $\rho$ and $K$ jointly under an explicit per-query compute constraint; in principle, such a policy could exploit interactions between visual context and reasoning effort that our staged design does not model. Our preliminary explorations suggest, however, that effective joint policies are non-trivial to design: naive instantiations we tried did not outperform AVIS, indicating that this direction likely requires richer training signals or differentiable surrogates for the discrete pruning and rollout decisions. We view this as an important and open problem rather than a straightforward extension.

\noindent\textbf{Text-aware difficulty prediction.}
Our difficulty predictor uses only the visual embeddings produced by the vision encoder. This choice is what allows $K$ to be selected before any LM forward pass and is the reason adaptive rollout selection composes cleanly with shared-prefill inference. Incorporating the question text could in principle sharpen difficulty estimates, but the most natural way to do so, via later LM representations~\cite{damani2024learning}, would require running part of the LM prefill before $K$ is decided, partially eroding these savings. Lightweight text-aware predictors that operate before the main LM prefill (e.g., over a small auxiliary text encoder, or over very early LM layers) are a promising direction that preserves the pre-prefill property AVIS relies on.

\noindent\textbf{Composition with other test-time scaling axes.}
We instantiate VRS through self-consistency with majority voting because it composes naturally with shared-prefill inference and is well-supported across model families. Other test-time scaling axes, sequential refinement, longer chain-of-thought, tree search, and Best-of-$N$ with learned verifiers or process reward models, are orthogonal to our allocation framework rather than competing with it, and combining adaptive parallel rollouts with adaptive sequential reasoning is a natural next step.

\section{Societal Impact}
\label{appendix:societal-impact}

AVIS is a test-time inference policy for existing vision-language models; it does not introduce new model capabilities, training data, or model weights. Its primary effect is to reduce the compute, latency, and energy required for VLM inference at comparable or improved accuracy, which can lower the cost and carbon footprint of multimodal model deployment. The qualitative capabilities and failure modes of the underlying VLMs, including hallucination, biased outputs, and potential for misuse such as automated surveillance or multimodal disinformation, are inherited from the base model and should be addressed with the same safeguards as for the underlying VLM.

\clearpage
%-------------------------------------------------------------------------------------------------
\addtocontents{toc}{\protect\setcounter{tocdepth}{-1}}
% NEURIPS PAPER CHECKLIST
\section*{NeurIPS Paper Checklist}

\begin{enumerate}

\item {\bf Claims}
    \item[] Question: Do the main claims made in the abstract and introduction accurately reflect the paper's contributions and scope?
    \item[] Answer: \answerYes{} % Replace by \answerYes{}, \answerNo{}, or \answerNA{}.
    \item[] Justification: The abstract and Section 1 clearly outline our core contributions: characterizing the trade-offs of Visual Context Scaling (VCS) and Visual Reasoning Scaling (VRS), and introducing the AVIS policy and Key Diversity Visual (KDV) pruning rule.
    \item[] Guidelines:
    \begin{itemize}
        \item The answer \answerNA{} means that the abstract and introduction do not include the claims made in the paper.
        \item The abstract and/or introduction should clearly state the claims made, including the contributions made in the paper and important assumptions and limitations. A \answerNo{} or \answerNA{} answer to this question will not be perceived well by the reviewers. 
        \item The claims made should match theoretical and experimental results, and reflect how much the results can be expected to generalize to other settings. 
        \item It is fine to include aspirational goals as motivation as long as it is clear that these goals are not attained by the paper. 
    \end{itemize}

\item {\bf Limitations}
    \item[] Question: Does the paper discuss the limitations of the work performed by the authors?
    \item[] Answer: \answerYes{} % Replace by \answerYes{}, \answerNo{}, or \answerNA{}.
    \item[] Justification: We discuss the limitations of our decoupled policy design for visual pruning and reasoning scaling in Section 6 (Conclusion) and suggest directions for future unified approaches.
    \item[] Guidelines:
    \begin{itemize}
        \item The answer \answerNA{} means that the paper has no limitation while the answer \answerNo{} means that the paper has limitations, but those are not discussed in the paper. 
        \item The authors are encouraged to create a separate ``Limitations'' section in their paper.
        \item The paper should point out any strong assumptions and how robust the results are to violations of these assumptions (e.g., independence assumptions, noiseless settings, model well-specification, asymptotic approximations only holding locally). The authors should reflect on how these assumptions might be violated in practice and what the implications would be.
        \item The authors should reflect on the scope of the claims made, e.g., if the approach was only tested on a few datasets or with a few runs. In general, empirical results often depend on implicit assumptions, which should be articulated.
        \item The authors should reflect on the factors that influence the performance of the approach. For example, a facial recognition algorithm may perform poorly when image resolution is low or images are taken in low lighting. Or a speech-to-text system might not be used reliably to provide closed captions for online lectures because it fails to handle technical jargon.
        \item The authors should discuss the computational efficiency of the proposed algorithms and how they scale with dataset size.
        \item If applicable, the authors should discuss possible limitations of their approach to address problems of privacy and fairness.
        \item While the authors might fear that complete honesty about limitations might be used by reviewers as grounds for rejection, a worse outcome might be that reviewers discover limitations that aren't acknowledged in the paper. The authors should use their best judgment and recognize that individual actions in favor of transparency play an important role in developing norms that preserve the integrity of the community. Reviewers will be specifically instructed to not penalize honesty concerning limitations.
    \end{itemize}

\item {\bf Theory assumptions and proofs}
    \item[] Question: For each theoretical result, does the paper provide the full set of assumptions and a complete (and correct) proof?
    \item[] Answer: \answerYes{} % Replace by \answerYes{}, \answerNo{}, or \answerNA{}.
    \item[] Justification: We present our FLOPs model and theoretical estimations in Section 3, with complete derivations, formal proofs, and assumptions fully detailed in Appendix A.
    \item[] Guidelines:
    \begin{itemize}
        \item The answer \answerNA{} means that the paper does not include theoretical results. 
        \item All the theorems, formulas, and proofs in the paper should be numbered and cross-referenced.
        \item All assumptions should be clearly stated or referenced in the statement of any theorems.
        \item The proofs can either appear in the main paper or the supplemental material, but if they appear in the supplemental material, the authors are encouraged to provide a short proof sketch to provide intuition. 
        \item Inversely, any informal proof provided in the core of the paper should be complemented by formal proofs provided in appendix or supplemental material.
        \item Theorems and Lemmas that the proof relies upon should be properly referenced. 
    \end{itemize}

    \item {\bf Experimental result reproducibility}
    \item[] Question: Does the paper fully disclose all the information needed to reproduce the main experimental results of the paper to the extent that it affects the main claims and/or conclusions of the paper (regardless of whether the code and data are provided or not)?
    \item[] Answer: \answerYes{} % Replace by \answerYes{}, \answerNo{}, or \answerNA{}.
    \item[] Justification: We thoroughly describe our baseline configurations, dataset curation process, hyperparameter settings (e.g., pruning threshold $\tau=\pi/4$), and evaluation frameworks in Section 5.1 and Appendix B.
    \item[] Guidelines:
    \begin{itemize}
        \item The answer \answerNA{} means that the paper does not include experiments.
        \item If the paper includes experiments, a \answerNo{} answer to this question will not be perceived well by the reviewers: Making the paper reproducible is important, regardless of whether the code and data are provided or not.
        \item If the contribution is a dataset and\slash or model, the authors should describe the steps taken to make their results reproducible or verifiable. 
        \item Depending on the contribution, reproducibility can be accomplished in various ways. For example, if the contribution is a novel architecture, describing the architecture fully might suffice, or if the contribution is a specific model and empirical evaluation, it may be necessary to either make it possible for others to replicate the model with the same dataset, or provide access to the model. In general. releasing code and data is often one good way to accomplish this, but reproducibility can also be provided via detailed instructions for how to replicate the results, access to a hosted model (e.g., in the case of a large language model), releasing of a model checkpoint, or other means that are appropriate to the research performed.
        \item While NeurIPS does not require releasing code, the conference does require all submissions to provide some reasonable avenue for reproducibility, which may depend on the nature of the contribution. For example
        \begin{enumerate}
            \item If the contribution is primarily a new algorithm, the paper should make it clear how to reproduce that algorithm.
            \item If the contribution is primarily a new model architecture, the paper should describe the architecture clearly and fully.
            \item If the contribution is a new model (e.g., a large language model), then there should either be a way to access this model for reproducing the results or a way to reproduce the model (e.g., with an open-source dataset or instructions for how to construct the dataset).
            \item We recognize that reproducibility may be tricky in some cases, in which case authors are welcome to describe the particular way they provide for reproducibility. In the case of closed-source models, it may be that access to the model is limited in some way (e.g., to registered users), but it should be possible for other researchers to have some path to reproducing or verifying the results.
        \end{enumerate}
    \end{itemize}

\item {\bf Open access to data and code}
    \item[] Question: Does the paper provide open access to the data and code, with sufficient instructions to faithfully reproduce the main experimental results, as described in supplemental material?
    \item[] Answer: \answerNo{} % Replace by \answerYes{}, \answerNo{}, or \answerNA{}.
    \item[] Justification: The code and full calibration dataset are not included in the current anonymous submission to preserve double-blind review, but will be open-sourced upon acceptance.
    \item[] Guidelines:
    \begin{itemize}
        \item The answer \answerNA{} means that paper does not include experiments requiring code.
        \item Please see the NeurIPS code and data submission guidelines (\url{https://neurips.cc/public/guides/CodeSubmissionPolicy}) for more details.
        \item While we encourage the release of code and data, we understand that this might not be possible, so \answerNo{} is an acceptable answer. Papers cannot be rejected simply for not including code, unless this is central to the contribution (e.g., for a new open-source benchmark).
        \item The instructions should contain the exact command and environment needed to run to reproduce the results. See the NeurIPS code and data submission guidelines (\url{https://neurips.cc/public/guides/CodeSubmissionPolicy}) for more details.
        \item The authors should provide instructions on data access and preparation, including how to access the raw data, preprocessed data, intermediate data, and generated data, etc.
        \item The authors should provide scripts to reproduce all experimental results for the new proposed method and baselines. If only a subset of experiments are reproducible, they should state which ones are omitted from the script and why.
        \item At submission time, to preserve anonymity, the authors should release anonymized versions (if applicable).
        \item Providing as much information as possible in supplemental material (appended to the paper) is recommended, but including URLs to data and code is permitted.
    \end{itemize}

\item {\bf Experimental setting/details}
    \item[] Question: Does the paper specify all the training and test details (e.g., data splits, hyperparameters, how they were chosen, type of optimizer) necessary to understand the results?
    \item[] Answer: \answerYes{} % Replace by \answerYes{}, \answerNo{}, or \answerNA{}.
    \item[] Justification: We detail the evaluation frameworks, baseline settings, temperature and top-p sampling hyper-parameters in Section 5.1, as well as the calibration dataset generation process in Appendix B.
    \item[] Guidelines:
    \begin{itemize}
        \item The answer \answerNA{} means that the paper does not include experiments.
        \item The experimental setting should be presented in the core of the paper to a level of detail that is necessary to appreciate the results and make sense of them.
        \item The full details can be provided either with the code, in appendix, or as supplemental material.
    \end{itemize}

\item {\bf Experiment statistical significance}
    \item[] Question: Does the paper report error bars suitably and correctly defined or other appropriate information about the statistical significance of the experiments?
    \item[] Answer: \answerNo{}  % Replace by \answerYes{}, \answerNo{}, or \answerNA{}.
    \item[] Justification: Error bars are not reported due to the prohibitively high computational cost of running multiple random seeds for large vision-language models evaluated across dozens of extensive image and video benchmarks. 
    \item[] Guidelines:
    \begin{itemize}
        \item The answer \answerNA{} means that the paper does not include experiments.
        \item The authors should answer \answerYes{} if the results are accompanied by error bars, confidence intervals, or statistical significance tests, at least for the experiments that support the main claims of the paper.
        \item The factors of variability that the error bars are capturing should be clearly stated (for example, train/test split, initialization, random drawing of some parameter, or overall run with given experimental conditions).
        \item The method for calculating the error bars should be explained (closed form formula, call to a library function, bootstrap, etc.)
        \item The assumptions made should be given (e.g., Normally distributed errors).
        \item It should be clear whether the error bar is the standard deviation or the standard error of the mean.
        \item It is OK to report 1-sigma error bars, but one should state it. The authors should preferably report a 2-sigma error bar than state that they have a 96\% CI, if the hypothesis of Normality of errors is not verified.
        \item For asymmetric distributions, the authors should be careful not to show in tables or figures symmetric error bars that would yield results that are out of range (e.g., negative error rates).
        \item If error bars are reported in tables or plots, the authors should explain in the text how they were calculated and reference the corresponding figures or tables in the text.
    \end{itemize}

\item {\bf Experiments compute resources}
    \item[] Question: For each experiment, does the paper provide sufficient information on the computer resources (type of compute workers, memory, time of execution) needed to reproduce the experiments?
    \item[] Answer: \answerYes{} % Replace by \answerYes{}, \answerNo{}, or \answerNA{}.
    \item[] Justification: Hardware details (a single NVIDIA L40 48GB GPU) and specific execution frameworks (vLLM v0.10.0) are documented alongside exact wall-clock latency measurements in Section 5.3.
    \item[] Guidelines:
    \begin{itemize}
        \item The answer \answerNA{} means that the paper does not include experiments.
        \item The paper should indicate the type of compute workers CPU or GPU, internal cluster, or cloud provider, including relevant memory and storage.
        \item The paper should provide the amount of compute required for each of the individual experimental runs as well as estimate the total compute. 
        \item The paper should disclose whether the full research project required more compute than the experiments reported in the paper (e.g., preliminary or failed experiments that didn't make it into the paper). 
    \end{itemize}
    
\item {\bf Code of ethics}
    \item[] Question: Does the research conducted in the paper conform, in every respect, with the NeurIPS Code of Ethics \url{https://neurips.cc/public/EthicsGuidelines}?
    \item[] Answer: \answerYes{}  % Replace by \answerYes{}, \answerNo{}, or \answerNA{}.
    \item[] Justification: The research utilizes open-source models and benchmark datasets for algorithmic efficiency analysis, conforming to the Code of Ethics.
    \item[] Guidelines:
    \begin{itemize}
        \item The answer \answerNA{} means that the authors have not reviewed the NeurIPS Code of Ethics.
        \item If the authors answer \answerNo, they should explain the special circumstances that require a deviation from the Code of Ethics.
        \item The authors should make sure to preserve anonymity (e.g., if there is a special consideration due to laws or regulations in their jurisdiction).
    \end{itemize}

\item {\bf Broader impacts}
    \item[] Question: Does the paper discuss both potential positive societal impacts and negative societal impacts of the work performed?
    \item[] Answer: \answerNo{} % Replace by \answerYes{}, \answerNo{}, or \answerNA{}.
    \item[] Justification: This paper focuses on foundational algorithm design (inference efficiency). The societal impacts of our method align entirely with those of general VLM development, thus we do not include a dedicated broader impact section.
    \item[] Guidelines:
    \begin{itemize}
        \item The answer \answerNA{} means that there is no societal impact of the work performed.
        \item If the authors answer \answerNA{} or \answerNo, they should explain why their work has no societal impact or why the paper does not address societal impact.
        \item Examples of negative societal impacts include potential malicious or unintended uses (e.g., disinformation, generating fake profiles, surveillance), fairness considerations (e.g., deployment of technologies that could make decisions that unfairly impact specific groups), privacy considerations, and security considerations.
        \item The conference expects that many papers will be foundational research and not tied to particular applications, let alone deployments. However, if there is a direct path to any negative applications, the authors should point it out. For example, it is legitimate to point out that an improvement in the quality of generative models could be used to generate Deepfakes for disinformation. On the other hand, it is not needed to point out that a generic algorithm for optimizing neural networks could enable people to train models that generate Deepfakes faster.
        \item The authors should consider possible harms that could arise when the technology is being used as intended and functioning correctly, harms that could arise when the technology is being used as intended but gives incorrect results, and harms following from (intentional or unintentional) misuse of the technology.
        \item If there are negative societal impacts, the authors could also discuss possible mitigation strategies (e.g., gated release of models, providing defenses in addition to attacks, mechanisms for monitoring misuse, mechanisms to monitor how a system learns from feedback over time, improving the efficiency and accessibility of ML).
    \end{itemize}
    
\item {\bf Safeguards}
    \item[] Question: Does the paper describe safeguards that have been put in place for responsible release of data or models that have a high risk for misuse (e.g., pre-trained language models, image generators, or scraped datasets)?
    \item[] Answer: \answerNA{} % Replace by \answerYes{}, \answerNo{}, or \answerNA{}.
    \item[] Justification: Our work proposes an inference-time acceleration technique and does not release any new high-risk pre-trained models or scraped datasets.
    \item[] Guidelines:
    \begin{itemize}
        \item The answer \answerNA{} means that the paper poses no such risks.
        \item Released models that have a high risk for misuse or dual-use should be released with necessary safeguards to allow for controlled use of the model, for example by requiring that users adhere to usage guidelines or restrictions to access the model or implementing safety filters. 
        \item Datasets that have been scraped from the Internet could pose safety risks. The authors should describe how they avoided releasing unsafe images.
        \item We recognize that providing effective safeguards is challenging, and many papers do not require this, but we encourage authors to take this into account and make a best faith effort.
    \end{itemize}

\item {\bf Licenses for existing assets}
    \item[] Question: Are the creators or original owners of assets (e.g., code, data, models), used in the paper, properly credited and are the license and terms of use explicitly mentioned and properly respected?
    \item[] Answer: \answerYes{} % Replace by \answerYes{}, \answerNo{}, or \answerNA{}.
    \item[] Justification: Existing models (Qwen2.5-VL series) and standard evaluation benchmarks are accurately referenced throughout Section 5 and Appendix B.
    \item[] Guidelines:
    \begin{itemize}
        \item The answer \answerNA{} means that the paper does not use existing assets.
        \item The authors should cite the original paper that produced the code package or dataset.
        \item The authors should state which version of the asset is used and, if possible, include a URL.
        \item The name of the license (e.g., CC-BY 4.0) should be included for each asset.
        \item For scraped data from a particular source (e.g., website), the copyright and terms of service of that source should be provided.
        \item If assets are released, the license, copyright information, and terms of use in the package should be provided. For popular datasets, \url{paperswithcode.com/datasets} has curated licenses for some datasets. Their licensing guide can help determine the license of a dataset.
        \item For existing datasets that are re-packaged, both the original license and the license of the derived asset (if it has changed) should be provided.
        \item If this information is not available online, the authors are encouraged to reach out to the asset's creators.
    \end{itemize}

\item {\bf New assets}
    \item[] Question: Are new assets introduced in the paper well documented and is the documentation provided alongside the assets?
    \item[] Answer: \answerNA{} % Replace by \answerYes{}, \answerNo{}, or \answerNA{}.
    \item[] Justification: We do not introduce newly collected datasets or foundational model architectures; our work evaluates an inference strategy over existing open-source benchmarks.
    \item[] Guidelines:
    \begin{itemize}
        \item The answer \answerNA{} means that the paper does not release new assets.
        \item Researchers should communicate the details of the dataset\slash code\slash model as part of their submissions via structured templates. This includes details about training, license, limitations, etc. 
        \item The paper should discuss whether and how consent was obtained from people whose asset is used.
        \item At submission time, remember to anonymize your assets (if applicable). You can either create an anonymized URL or include an anonymized zip file.
    \end{itemize}

\item {\bf Crowdsourcing and research with human subjects}
    \item[] Question: For crowdsourcing experiments and research with human subjects, does the paper include the full text of instructions given to participants and screenshots, if applicable, as well as details about compensation (if any)? 
    \item[] Answer: \answerNA{} % Replace by \answerYes{}, \answerNo{}, or \answerNA{}.
    \item[] Justification: The research does not involve crowdsourcing or human subjects.
    \item[] Guidelines:
    \begin{itemize}
        \item The answer \answerNA{} means that the paper does not involve crowdsourcing nor research with human subjects.
        \item Including this information in the supplemental material is fine, but if the main contribution of the paper involves human subjects, then as much detail as possible should be included in the main paper. 
        \item According to the NeurIPS Code of Ethics, workers involved in data collection, curation, or other labor should be paid at least the minimum wage in the country of the data collector. 
    \end{itemize}

\item {\bf Institutional review board (IRB) approvals or equivalent for research with human subjects}
    \item[] Question: Does the paper describe potential risks incurred by study participants, whether such risks were disclosed to the subjects, and whether Institutional Review Board (IRB) approvals (or an equivalent approval/review based on the requirements of your country or institution) were obtained?
    \item[] Answer: \answerNA{} % Replace by \answerYes{}, \answerNo{}, or \answerNA{}.
    \item[] Justification: The research does not involve human subjects, rendering IRB approval inapplicable.
    \item[] Guidelines:
    \begin{itemize}
        \item The answer \answerNA{} means that the paper does not involve crowdsourcing nor research with human subjects.
        \item Depending on the country in which research is conducted, IRB approval (or equivalent) may be required for any human subjects research. If you obtained IRB approval, you should clearly state this in the paper. 
        \item We recognize that the procedures for this may vary significantly between institutions and locations, and we expect authors to adhere to the NeurIPS Code of Ethics and the guidelines for their institution. 
        \item For initial submissions, do not include any information that would break anonymity (if applicable), such as the institution conducting the review.
    \end{itemize}

\item {\bf Declaration of LLM usage}
    \item[] Question: Does the paper describe the usage of LLMs if it is an important, original, or non-standard component of the core methods in this research? Note that if the LLM is used only for writing, editing, or formatting purposes and does \emph{not} impact the core methodology, scientific rigor, or originality of the research, declaration is not required.
    %this research? 
    \item[] Answer: \answerYes{} % Replace by \answerYes{}, \answerNo{}, or \answerNA{}.
    \item[] Justification: We explicitly document our use of VLM/LLM models (e.g., Qwen2.5-VL-32B and Qwen2.5-VL-7B) as automatic judges to filter data and estimate difficulty during the curation of our calibration dataset, as outlined in Appendix B.
    \item[] Guidelines:
    \begin{itemize}
        \item The answer \answerNA{} means that the core method development in this research does not involve LLMs as any important, original, or non-standard components.
        \item Please refer to our LLM policy in the NeurIPS handbook for what should or should not be described.
    \end{itemize}

\end{enumerate}

\end{document}